\newtheorem{assum}{Assumption}
\long\def\symbolfootnote[#1]#2{\begingroup%
\def\thefootnote{\fnsymbol{footnote}}\footnote[#1]{#2}\endgroup}
\renewcommand{\baselinestretch}{1.2}
\begin{document}

\thispagestyle{empty}
\begin{center}
{\Large \bf Adaptive Artificial Time Delay Control for Robotic Systems}

\vspace*{2.75cm}
{\large Thesis submitted in partial fulfillment\\}
{\large  of the requirements for the degree of \\}

\vspace*{1cm}
{\it {\large Master of Science in \\ \textbf{Electronics and Communication Engineering} \\ by Research}}

\vspace*{1cm}
{\large by}

\vspace*{5mm}
{\large Swati Dantu\\}
{\large 2020702014\\
{\small \tt swati.dantu@research.iiit.ac.in}}

\vspace*{3.0cm}
{\large International Institute of Information Technology\\}
{\large Hyderabad - 500 032, INDIA\\}
{\large June 2023\\}
\end{center}

\newpage
\thispagestyle{empty}
\renewcommand{\thesisdedication}{{\large Copyright \copyright~~Swati Dantu, 2023\\}{\large All Rights Reserved\\}}
\thesisdedicationpage

\newpage
\thispagestyle{empty}
\vspace*{1.5cm}
\begin{center}
{\Large International Institute of Information Technology\\}
{\Large Hyderabad, India\\}
\vspace*{3cm}
{\Large \bf CERTIFICATE\\}
\vspace*{1cm}
\noindent
\end{center}
It is certified that the work contained in this thesis, titled "\textit{Adaptive Artificial Time Delay Control for Robotic Systems}" by Swati Dantu, has been carried out under
my supervision and is not submitted elsewhere for a degree.

\vspace*{3cm}
\begin{tabular}{cc}
\underline{\makebox[1in]{}} & \hspace*{5cm} \underline{\makebox[2.5in]{}} \\
Date & \hspace*{5cm} Adviser: Dr. Spandan Roy
\end{tabular}
\oneandhalfspace

\newpage
\thispagestyle{empty}
\renewcommand{\thesisdedication}{\large To My Family}
\thesisdedicationpage

\mastersthesis

\renewcommand{\baselinestretch}{1.5}
\newtheorem{theorem}{Theorem}
\newtheorem{lem}{Lemma}
\newtheorem{remark}{Remark}
\newtheorem{assumption}{Assumption}
\newtheorem{proof}{Proof}


\counterwithin{remark}{chapter}
\counterwithin{lem}{chapter}
\counterwithin{assumption}{chapter}
\counterwithin{theorem}{chapter}


\chapter*{Acknowledgments}
\label{ch:ack}

First and foremost I would like to express my sincerest and unparalleled gratitude to Dr. Spandan Roy for trusting me and giving me an opportunity to be a part of Robotics Research Center. It is because of this opportunity, I was able to learn to conduct research at the highest level. He always has been very open to my ideas and was very kind in helping me with subject matter: be it fundamentals or one of my ill-conceived ideas.

I feel immensely fortunate to be a part of IIIT Hyderabad. I found incredible people here that I am proud to call friends with whom I formed unbreakable bonds forged during the trying times of Covid-19 pandemic. I had the best of times locked in the campus with these amazing people: Jhanvi, Kaustab, Rishabh, Sasi, Girish, Rahul, Sudarshan, Omama, Ayyappa, Bhanu, Ananth and many more students at IIIT. I am particularly grateful for my seniors Viswa and Udit for mentoring me. I am ever grateful to Rishabh for being an amazing project partner. 

Finally but most importantly, I would like to thank my parents and sister for always being there for me and believing in me. I would like to specifically thank my sister for always being there for me with her infectious positive energy that made any day brighter. I am extremely fortunate to always have them by my side.

Thank you all!


\chapter*{Abstract}
\label{ch:abstract}

\vspace{-5mm}
{Artificial time delay controller} was conceptualised for nonlinear systems to reduce dependency on precise system modelling unlike the conventional adaptive and robust control strategies. In this approach \textit{unknown dynamics is compensated by using input and state measurements collected at immediate past time instant (i.e., artificially delayed)}. The advantage of this kind of approach lies in its simplicity and ease of implementation. However, the applications of artificial time delay controllers in robotics, which are also robust against unknown state-dependent uncertainty, are still missing at large. This thesis presents the study of this control approach toward two important classes of robotic systems, namely a fully actuated bipedal walking robot and an underactuated quadrotor system.

In the first work, we explore the idea of a unified control design instead of multiple controllers for different walking phases in adaptive bipedal walking control (i.e. control taking care of unknown robot parameters) while bypassing computing constraint forces, since they often lead to complex designs. State-of-the art attempts to design a single controller for all walking phases either ignored or oversimplified the state-dependent constraint forces which may lead to conservative performance or even instability. This work proposes an innovative artificial time delay based adaptive control method, which covers the entire bipedal walking phase and provides robustness against state-dependent unmodelled dynamics such as constraint forces and external impulsive forces arising during walking. Studies using a high fidelity simulator under various forms of disturbances show the effectiveness of the proposed design over the state of the art.

The second work focuses on quadrotors employed for applications such as payload delivery, inspection and search-and-rescue. Such operations pose considerable control challenges, especially when various (a priori unbounded) state-dependent unknown dynamics arises from payload variations, aerodynamic effects and from reaction forces while operating close to the ground or in a confined space. 
The existing adaptive control strategies for quadrotors, unfortunately, are suitable to handle unknown state-dependent uncertainties. We address such unsolved control challenge in this work via a novel adaptive artificial time delay controller. 
The effectiveness of this controller is validated using experimental results. 
\tableofcontents
\listoffigures
\listoftables

\chapter{Introduction}
\label{ch:intro}

Owing to the advancement in technology, there is a steady rise in the use of robotics in both research and industry for commercial purposes. These robots are often required to be operated autonomously to perform repeated tasks more efficiently than humans. One of the critical components to achieve autonomy is control system. Control has a broad classification from choosing a high-level trajectory tracking to a low-level control for choosing actuator inputs. The primary objective of a high level controller is to calculate desired actuator inputs for a specific objective such a trajectory tracking or velocity tracking. These inputs are further converted to current or voltage inputs by another controller at the lower level.

In the following sections of this introductory chapter, the motivation for this thesis is discussed, followed by a brief overview of the relevant fundamental principles and methods involved in constructing this work. Then, a brief section on the contribution of this thesis is provided, followed by the overall organization of the thesis.

\section{Motivation}

A nonlinear system in a practical scenario is always subjected to parametric and non parametric uncertainties making the control problem extremely challenging. To tackle this challenge, researchers have extensively used adaptive and robust control strategies \cite{rob4rotor, SatARC, ARCsin, ARCrod, mu}. A robust control strategy provide robustness against system uncertainties within a predefined bound. Whereas, an adaptive control strategy estimates the unknown system parameters online without their a priori knowledge. This apparently gives adaptive controller an edge over a robust controller. However, compared to a robust controller, an adaptive controller requires structural knowledge of the system as well as is computationally intensive. 

In view of the individual challenges of a typical adaptive and robust controller, adaptive-robust control strategy was devised combining the best qualities of both the approaches (cf. \cite{ASMCmimo, ASMCmultibody, ASMCog, ASMCtwist, AMSCsupertwist} and references therein). In this approach, uncertainties are represented by one single lumped function and then it is estimated online. 
Nevertheless, these existing adaptive-robust controllers a priori knowledge of the nominal values of uncertain system parameters. This implies that the accurate modelling of the system is necessary which is not possible in the face of unmodelled dynamics.

Therefore, the time-delay estimation (TDE) \cite{Ref:6-2, Hsia1991,TDC,TDCInternal, TDCmotioncontrol,TDCrobust, TDCterminalSM, TDCunderwater} technique has been introduced as a simple alternative for the design of a model free controller that addresses the lack of a priori knowledge of uncertainty and/or unmodelled dynamics. The TDE technique assumes a single lumped system uncertainty and estimates it with only control input and state information of the previous time instant. This time-delay control (TDC) usually consists of two parts. One part cancels the unknown and non-linear dynamics of the robot and the other part is the desired dynamics injection part.

The estimation error originating from the TDE method, a.k.a the TDE error, deteriorates the control performance. Various robust and adaptive control strategies have been employed along with TDE to mitigate such effect \cite{Ref:new1, Ref:33,Ref:34,TDCmotioncontrol, Ref:lee1, Lee_humanoid_49, Ref:eh, Ref:fuel,Ref:sm}. However, these existing designs rely on a priori boundedness of the TDE error, which ignores state-dependent uncertainty. This thesis primarily addresses such a gap in literature in the context of two important classes of Euler-Lagrange systems.

\section{Preliminaries}
\subsection{Choosing Euler-Lagrange Systems}

Any electro-mechanical system that captute a wide range of real-world robotic systems such as mobile robots, aerial robots, legged robots, manipulators \cite{ELadap, ELselfconfig, ELvanish, TDCrobust, TDCunderwater, wang2022adaptive, ye2020switching} can be modelled using Euler-Lagrange dynamics. Usually in modelling a system with lumped parameters such a electro-mechanical systems, there are two approaches: 
\begin{itemize}
    \item Derivation of equations of motion using the first principles of laws of forces such as Newton's second law and Kirchhoff's law. Such approach becomes tedious for complicated systems and require significant domain knowledge.
    \item Derivation of system equations called the EL equations which are non-linear differential equations whose starting point is the definition of energy functions which leads to the definition of Lagrangian. This approach provides a generalised representation of system dynamics eliminating the need to identify different kinds of plants for each controller.
\end{itemize}

\subsubsection{Overview of Euler-Lagrange representation}

In this representation, the equations of motion are formulated by minimising Lagrangian Integral defined as:

\[
    \int_{t_i}^{t_f} \mathcal{L}(t,q(t),\dot{q}(t))dt
\]
where $q(t),\dot{q}(t) \in \mathbb{R}^n$ are the generalised coordinated and their corresponding velocities. Considering $q(t_i)$ and $q(t_f)$ to be start and end points of a path taken according to the Lagrangian, 
Hamilton's principle states that the path must have stationary action \cite{EL_base}. Using this principle and considering non-conservative forces such as control input, external disturbance and friction in mechanical systems:
\begin{equation}
    \frac{d}{dt}\left(\frac{\partial \mathcal{L}}{\partial \dot{q}} \right) - \frac{\partial \mathcal{L}}{\partial {q}} = \tau + d_s - h(q,\dot{q}) 
\end{equation}
where $\tau \in \mathbb{R}^n$ is the control input, $d_s \in \mathbb{R}^n$ is the lumped parameter of all external disturbances and $h(q,\dot{q}) \in \mathbb{R}^n $ is the parameter representing the frictional forces.
The above mentioned EL system can be compactly can be written as:

\begin{equation}\label{EL_fin}
        M(q)\ddot {q}+ C(q ,\dot{q})\dot{q}+ G(q)+ f(\dot{q})+d_s =  \tau
    \end{equation}
    where $M(q)\in \mathbb {R}^{n \times n}$ is the generalised inertia matrix which is uniformly positive definite,
    $C(q ,\dot{q}) \in \mathbb{R}^{n \times n}$ denotes the centripetal and coriolis terms, 
  $G(q) \in \mathbb{R}^n$ denotes the gravity vector
    $f(\dot{q}) \in \mathbb{R}^n$ represents the frictional terms.
    
The above representation (\ref{EL_fin}) has some properties that are exploited for control design (cf.\cite{TDCtextbook, roy2019simultaneous, roy2021adaptive, roy2020towards, baldi2020towards}). 
\begin{itemize}
    \item $M(q),g(q) ~ \text{and} ~ d_s$ are bounded individually.
    \item $C(q ,\dot{q})$ is upper bounded as $\lVert C(q ,\dot{q}) \rVert \leq C_b \lVert \dot{q} \rVert ~ \text{where} ~ C_b \in \mathbb{R}^+$
    \item $f(\dot{q})$ is upper bounded as $\lVert f(\dot{q}) \rVert \leq f_b \lVert \dot{q} \rVert ~ \text{where} ~ f_b \in \mathbb{R}^+$
\end{itemize}

These properties are later used in Chapters 2 and 3 for control design.
\subsection{Stability Notion}
For an autonomous nonlinear system that is represented by the following dynamic equation(cf.\cite{khalil})
\begin{equation}
    \dot{x} = f(x(t)),~~ x(0) = x_0 \nonumber
\end{equation}
with $x(t) \in \mathbb{R}^n$ representing the state vector, $f$ satisfying the standard conditions for existence and uniqueness such a \textit{Lipschitz continuous} with respect to $f$; the stability is attained when the function $f$ at equilibrium point $x = x_e$ is $f(x_e) = 0$ and $f'(x_e)=0$.
This equilibrium point $x_e$ is said to be:
\begin{itemize}
    \item \textbf{Stable} in the sense of Lyapunov, if for any $\epsilon > 0$, there exists $\delta > 0$ such that
    \begin{equation}
        \lVert x(0) - x_e \rVert < \delta \implies 
        \lVert x(t) - x_e \rVert < \epsilon ~~~~
        \forall t \geq 0 \nonumber
    \end{equation}
Stability ensures that the states starting within the bound $\delta$, will remain within the bound $\epsilon$   

\item \textbf{Asymtodically Stable} in the sense of Lyapunov if 
\begin{itemize}
    \item The equilibrium point $x_e$ is stable
    \item The equilibrium point $x_e$ is locally attractive i.e.,
\[
    \lim_{t\to\infty} \lVert x(t) - x_e \rVert = 0
\]

\end{itemize}
This implies that the system is not only stable but also is converging towards the equilibrium point $x_e$.

\item \textbf{Exponentially Stable} in the sense of Lyapunov if there exist constants $\alpha,\beta > 0,\epsilon > 0$ such that
\begin{equation}
    \lVert x(t) - x_e \rVert \leq \alpha \lVert x(0) - x_e \rVert e^{-\beta t} 
   ~~~~ \forall t \geq 0 ~ \text{and}~ \lVert x(0) \rVert \leq \epsilon \nonumber
\end{equation}

Exponential stability ensures that the system converges to equilibrium at an exponential rate. 

\item \textbf{Uniformly Ultimately Bounded(UUB)} with an ultimate bound $b$ if  there exist $b >0, c>0$ and for every $a \in (0,c)$ there exist $T = T(a,b) \geq 0$ such that 
\begin{equation}
    \lVert x(0) \rVert \leq a \implies 
    \lVert x(t) \rVert \leq b, ~~~~
    \forall t \geq T \nonumber
\end{equation}
This is used to show the boundedness even if there is no equilibrium point at the origin. 
\end{itemize}

\subsection{Artificial Delay-based Controller: A Background on TDE method}

While there is existing literature that deals with systems that have explicit time-delays from various sources (such as actuator delay, communication channel-induced delay, etc.), a parallel body of literature is emerging where time-delay is artificially introduced into the system to achieve various advantages. In studies \cite{TDE_extreme_34,TDE_extreme_35}, external time delay is purposefully introduced into the system to provide state derivative feedback. Time-Delayed Control (TDC) \cite{Ref:6, Ref:6-2} was conceptualized as an alternative robust control paradigm, especially for uncertain nonlinear systems with the goal of reducing system modeling dependency. However, these studies mostly focus on linear systems.

Traditional robust control strategies require prior knowledge of the entire system to devise a suitable predefined uncertainty bound. However, it is not always practical to find such a bound. In contrast, TDC does not require a thorough understanding of the system model. This control methodology combines all uncertain system dynamics terms into a single lumped unknown function and approximates it using control input and state information from the most recent time instant, which is referred to as time-delayed estimation (TDE).

The benefits of TDC are well-known, as it greatly reduces the burden of tediously identifying a complex system model and is easy to implement. It is worth noting that researchers have used a Neural Network (NN)-based technique to approximate uncertain system dynamics \cite{adap_38,NN_40}. However, TDC has a lower computational cost than NN-based approaches, as demonstrated in \cite{NN_41}. This is because the latter involves a greater number of tunable parameters and requires expert knowledge during the approximation stage. TDE method approximates the lumped system uncertainty by only using control input and state information from the immediate past time instant. Moreover, the design process does not require expert knowledge \cite{Ref:6, Ref:6-2, TDC_low_freq_42, TDC_tiltRotor_43, roy2015robust}.

\subsubsection{A Brief Outline of TDC}

From the equation (\ref{EL_fin})
\begin{equation}
        M \ddot {q}+ C(q ,\dot{q})\dot{q}+ G(q)+ f(\dot{q})+d =  \tau \nonumber
    \end{equation}
    where $q(t) \in \mathbb{R}^n;\tau \in \mathbb{R}^n;
    M(q)\in \mathbb {R}^{n \times n};
    C(q ,\dot{q}) \in \mathbb{R}^{n \times n};
    G(q) \in \mathbb{R}^n;
    f(\dot{q}) \in \mathbb{R}^n$
    
\begin{equation}\label{EL_lump}
    \bar{{M}}\ddot {{q}}+ {N}({q},\dot{{q}},\ddot{{q}}) = \tau
\end{equation}
where ${N}({q},\dot{{q}},\ddot{{q}}) \triangleq ({M}({q}) - \bar{{M}}) \ddot {{q}} + {C}{q} ,\dot{{q}})\dot{{q}}+ {G}({q})+ {f}(\dot{{q}})+{d}$ with $\bar{{M}} > 0$, a constant matrix. 

The control input for tracking a desired trajectory $q^d$ is defined as:
\begin{align}
 \tau &= \bar{{M}} {{u}}+ \hat{{N}} \label{EL_InverseDynamics}
({q},\dot{{q}},\ddot{{q}}) \\ \nonumber
{u} &= {u}_0 + \Delta {u} \\ \nonumber
{u}_0 &= \ddot{{q}}^d + {K}_D \dot{{e}} + {K}_P {e}  \nonumber
\end{align}
where $u$ is the auxiliary control input, $\hat{N}$ is the nominal value of $N$,
${e} \triangleq {q}^d - 
{q}$ is the tracking error; ${K}_P,{K}_D \in \mathbb{R}^{n \times n}$ are positive definite matrices and $\Delta {u}$ is the adaptive control term. \\

$\hat{{N}}$ is the estimated value of ${N}$ computed via past input and state data as follows with $L>0$, a small time delay to reduce the  modelling effort of a complex system as follows:
\begin{equation}\label{TDC_approx}
{\hat{{N}}({q},\dot{{q}},\ddot{{q}}) \cong {N}({q}({t} - {L}),\dot{{q}}({t} - {L}),\ddot{{q}}({t} - {L})) =  \tau({t} - {L}) - \bar{{M}}\ddot {{q}}({t} - {L})}
\end{equation}

The Time Delay Estimation(TDE) error which is the approximation error in TDC remains bounded for the system (\ref{EL_lump}) if $\bar{M}$ is selected such that
\begin{equation}\label{M_Condition}
    \lVert M^{-1}\bar{M} - I \rVert < 1
\end{equation}
The original system (\ref{EL_fin}) is delay-free. In TDC, however, the time delay $L$ in equation (\ref{TDC_approx}) is intentionally introduced to approximate the term $N$ using time-delayed input and state information. This approach reduces the modeling effort, giving TDC an advantage over traditional robust control strategies such as in \cite{ROC_5, Ref:slotine}. For instance, to design its control input, the designs in \cite{ROC_5, Ref:slotine} require nominal knowledge of both $M$ and $N$ \cite{ROC_5}. 
On the other hand, TDC only requires knowledge of the range of perturbation in the mass matrix $ M$ to design the control law, as shown in (\ref{EL_InverseDynamics})-(\ref{TDC_approx}) and (\ref{M_Condition}). Over the last two and a half decades, TDC's simplicity and effectiveness have benefited shape memory alloys \cite{TDCterminalSM}, ionic polymer metal composite actuators \cite{Ref:ipmc}, aerial vehicles \cite{Ref:new1}, wheeled mobile robots \cite{Ref:33}, underwater vehicles \cite{Ref:34}, manipulators \cite{TDCmotioncontrol, Ref:lee1}, humanoids \cite{Lee_humanoid_49}, electro-hydraulic actuators \cite{Ref:eh}, fuel-cell systems \cite{Ref:fuel}, and synchronous motors \cite{Ref:sm}. It has been demonstrated that the conventional TDC outperforms the traditional PID controller \cite{chin1994experimental,TDC_tiltRotor_43} or a class of adaptive sliding mode controllers \cite{roy2015robust}.

\section{Contribution of this Thesis}

In the view of above discussion, this thesis contributes to the following direction:
\begin{itemize}
    \item A time delay-based adaptive controller is formulated can tackle state-dependent uncertainty and unmodelled dynamics without their a priori knowledge, regardless of the nature of these as seen in the applications to payload delivery with both bipedal walking and quadrotors.
    
    \item The formulated controller is simpler as it is capable of controlling a bipedal in all the phases of walking of single support stage, double support stage.(cf. \cite{mu2004development, mitobe1997control, braun2009control}) and accounts for constraint forces without the need for additional computation (unlike cf. \cite{li2013adaptive, sun2016adaptive}).  
    
    \item  The proposed method for a quadrotor
system, is a first of its kind, because the existing TDE methods for quadrotors  (\cite{lee2012experimental, wang2016model, dhadekar2021robust}) are non-adaptive solutions, to the best of the author's knowledge.

\item The closed-loop stability of the proposed controller has been verified analytically. The performance of the formulated control design has been verified via realistic experiments with a high fidelity simulator along with hardware experiments(in the case of Quadrotor). To validate the efficacy of the controller design, the obtained results are compared against state-of-the-art methods. 
\end{itemize}

\section{Thesis Organisation}

This thesis is organised into 4 chapters as follows:

\begin{itemize}
    \item \textbf{Chapter 1:} This is an introductory chapter detailing the motivation behind the research, the contributions of the thesis and its outline.
    \item \textbf{Chapter 2:} This chapter introduces Time Delay based Controller(TDC) for all the stages of bipedal walking together. The Euler-Lagrange dynamics for the bipedal has been introduced. The uncertainties are lumped and is estimated using time delay control strategies. The closed loop stability is established analytically using Lyapunov theory. Multiple simulation scenarios are provided to establish the performance improvement over the state-of-the-art controllers.
    \item \textbf{Chapter 3:} This chapter first introduces quadrotor co-design dynamics wherein the position and attitude dynamics are partly decoupled into outer loop and inner loop respectively. Time delay based controller is designed to address the unmodelled, uncertain parameters that are involved while carrying a payload of unknown mass via a quadrotor. The closed loop stability is established analytically and experimental scenario performed on hardware is discussed.
    
    \item \textbf{Chapter 4:} This chapter concludes the thesis and a briefly discusses the possible future work.
\end{itemize} 
\chapter{Adaptive Artificial Time Delay Control for Bipedal Walking with Robustification to State-dependent Constraint Forces}
\label{bipedal_paper}

\section{Introduction}
Artificial time delay based control is a control strategy requiring limited system knowledge: it was proposed in, by approximating uncertainties via input and state information of immediate past instant. Owing to its simplicity in implementation and significantly low computation burden, adaptive control using artificial delay found remarkable acceptance in the robotics community in the last decade, including bipedal robot control.

Therefore, based on the proven benefits of artificial time delay based designs over other adaptive schemes, an obvious question arises: can the state-of-the-art biped controllers,
be extended to handle the unmodelled constraint forces? Unfortunately, we do not get a positive answer for this question as constraint forces are state-dependent terms and cannot be bounded a priori; whereas, cannot handle state-dependent unknown uncertainties (and discussion later in Remark 4). In fact, under such uncertainty setting, instability cannot be ruled out for adaptive controllers which are built on the assumption of a priori bounded uncertainty.
In view of the above discussion, an adaptive-robust TDE (ARTDE) scheme is designed for bipedal walking with the following contributions:

\begin{itemize}
    
    \item The proposed controller is simpler as it does not require different controllers for different phases of walking motion (cf. \cite{mu2004development, mitobe1997control, braun2009control}) or does not require to compute constraint forces separately (cf. \cite{li2013adaptive, sun2016adaptive}). 
    \item Unlike \cite{lee2016robust, Ref:hum, pi2019adaptive}, the proposed adaptive TDE method is also \emph{robust} (hence called adaptive-robust TDE) to constraint forces, which are considered as state-dependent unmodelled dynamics. 
    
\end{itemize}

The rest of the chapter is organised as follows: Section 2.2 describes the Euler Lagrange Dynamics;
Section 2.3 details the proposed control framework, while corresponding stability analysis is provided
in Section 2.4; comparative simulation results are provided in Section 2.5.

\section{System Dynamics and Problem Formulation}\label{sec TDE}

Let us consider the following class of $n$ degrees-of-freedom (DOFs) dynamics for the biped robot \cite{park2001impedance, pi2019adaptive} 
\begin{equation}\label{sys}
{ M(   q)\ddot{  q}+  H( q,\dot{ q})= \tau},
\end{equation}
where ${q}\in\mathbb{R}^{n}$ is the joint position; $\tau\in\mathbb{R}^{n}$ is the control input; ${M(q)}\in\mathbb{R}^{n\times n}$ is the mass/inertia matrix and ${ H( q,\dot{ q})}\in\mathbb{R}^{n}$ combines other system dynamics terms (e.g., Coriolis, friction, gravity) including unmodelled dynamics and disturbances. The following standard property holds 

\textbf{Property} (\cite{park2001impedance,lee2016robust, Ref:hum, pi2019adaptive}):
The matrix ${M}(  q)$ is uniformly positive definite for all $ q$, i.e., $\exists \psi_1, \psi_2 \in \mathbb{R}^{+}$ such that 
\begin{equation}
\psi_1 {I} \leq {M}(  q) \leq \psi_2 {I} \Rightarrow (1/\psi_2) {I} \leq {M}^{-1}(  q) \leq (1/\psi_1) {I}. \label{prop}
\end{equation}
\begin{remark}[On the system dynamics]
As an alternative to the unconstrained system dynamics \eqref{sys}, some researchers have proposed constrained multi-modal dynamics (cf. \cite{mu2004development, mitobe1997control, braun2009control, li2013adaptive, sun2016adaptive}). However, the latter approach involves different dynamics for different walking phases (e.g., single support, double support, impulse etc.), requiring different controller for each phases, making the control design and analysis comparatively difficult \cite{braun2009control}. Dynamics \eqref{sys} can simplify the control design provided it can handle the ground reaction forces acting as impulsive unmodelled dynamics.
\end{remark}

\begin{assumption}[Uncertainty setting]\label{assm_1}
Inertia matrix $ M$ is not precisely known, but its upper bound $\psi_2$ from (\ref{prop}) is known. At the same time, the state-dependent (via ${q}$ and $\dot{ q}$) dynamics term ${H}$ in \eqref{sys} is unknown. Hence, ${H}$ cannot be considered to be bounded by a constant a priori to the control design \cite{roy2020adaptive, roy2019overcoming, roy2019role}. 
\end{assumption}
\begin{remark}[Generality of the proposed approach]
Bipedal motion from a constrained and an unconstrained dynamics are related via state-dependent dynamics terms \cite{braun2009control}. Therefore, Assumption \ref{assm_1} allows to include constraint forces in the unified dynamics \eqref{sys} as unmodelled dynamics. Nevertheless, state-of-the-art adaptive designs cannot handle such a challenge for biped walking problem (cf. Remark 4).
\end{remark}


\begin{assumption}
The desired trajectory ${q}^d(t)$ is designed such that ${q}^d, \dot{ q}^d, \ddot{ q}^d \in \mathcal{L}_{\infty}$. Further, to avoid kinematic singularity, the desired knee angle trajectory is designed such that the knee is never fully-stretched (cf. \cite{park2001impedance}).
\end{assumption}

In view of the aforementioned discussion, the control problem is defined as below:

\textbf{Control Problem:} Under Assumptions 1-2 and Property \eqref{prop}, design an adaptive control for biped walking motion while negotiating unknown state-dependent uncertainty stemming from constraint forces (in line with Remark 2).

\section{Controller Design and Analysis}\label{sec TDE}
Before presenting the proposed controller, the biped dynamics \eqref{sys} is re-arranged by introducing a constant and positive definite diagonal matrix ${\bar { M}}$ as
\begin{align} 
&{\bar{ M}\ddot { q}} + {{N}}({{q}},{{\dot { q}},\ddot{ q}}) = { \tau },\label{robotdynamics2}\\
\text{with}~~~~&{{N}}({{q}},{{\dot { q}},\ddot{ q}}) = [{{M}}({{q}}) - {\bar { M}}]{\ddot { q}} +  H( q,\dot{ q})\label{nonlinearterms}
\end{align}
and the choice of ${\bar { M}}$ is discussed later (cf. discussion after \eqref{beta}). Note that owing to Assumption 1, the unknown dynamics is now subsumed under $ N$.

Let us define the tracking error as ${e}(t)={q}^d(t)-{q}(t)$. Subsequently, variable dependency will be omitted whenever obvious for brevity. The control input $\tau$ is designed as
\begin{subequations}\label{tau}
\begin{align}
\tau &={\bar{ M}  u+\hat{ N}}({{q}},{{\dot { q}},\ddot{ q}}), \label{input}\\
{u} &= {u}_0+\Delta  u,\label{tarc input}\\
 u_0 &=\ddot{ q}^d +  K_D\dot{ e} +  K_P  e, \label{aux}
\end{align}
\end{subequations}
where $ K_P,  K_D \in \mathbb{R}^{n \times n}$ are two positive definite matrices; $\Delta  u$ is the adaptive control term to be designed later and ${\hat{ N}}$ is the estimated value of ${N}$ computed via the past input and state data as \cite{Ref:6-2,Hsia1991} 
\begin{equation}\label{approx}
{\hat{ N}(  q,\dot{ q} , \ddot{ q})\cong}  N( q_L,\dot{ q}_L,\ddot{ q}_L)=\tau_L-\bar{ M}\ddot{ q}_L,
\end{equation}
where $L>0$ is a small time delay. 
\begin{remark}[Artificial time delay]
The uncertainty estimating process via past data (i.e., time delayed data) as in \eqref{approx} is typically called in literature as \textit{time-delay estimation} (TDE) or \textit{artificial delay} based estimation method, since time delay is invoked into the system artificially/intentionally via past data, while the original system was free of any time delay. Since the TDE process \eqref{approx} relies on immediate past data, $L$ is typically selected in practice as the sampling interval of available hardware \cite{Hsia1991,Ref:6-2, Ref:9_1, Ref:12,Ref:jin,Ref:jin2017model,Ref:roy2017adaptive}.
\end{remark}

Substituting (\ref{input}) in (\ref{robotdynamics2}), one obtains
\begin{align}
\ddot{ e}& =- K_D\dot{ e}_{L}- K_P  e_{L}+ \sigma, \label{error dyn delayed 2}
\end{align}
where $\sigma={\bar{ M}}^{-1}({ N}-{\hat{ N}})$ represents the \textit{estimation error stemming from (\ref{approx})}, also termed as \textit{TDE error}.  

Design of the adaptive control term $\Delta  u$ relies on the upper bound structure of $\sigma$ as derived subsequently, followed by the proposed adaptive law.

\subsection{Upper bound structure of $ \sigma$} \label{sec bound}
From (\ref{nonlinearterms}) and (\ref{error dyn delayed 2}), the following relations can be achieved:
\begin{align}
\hat{ N}&={N}_L=[{{M}}({{q}_L}) - {\bar { M}}]{\ddot { q}_L} +  H_L ,\label{sig 2} \\
 \sigma&=\ddot{ q}- u. \label{sig 1} 
\end{align}
Using (\ref{sig 2}), the control input $ \tau$ in  (\ref{input}) can be rewritten as
\begin{align}
 \tau &= \bar{ M}  u+[{ {M}}({{q}_L}) - {\bar { M}}]{\ddot { q}_L} +  H_L. \label{tau new}
\end{align}
Multiplying both sides of (\ref{sig 1}) with $ M$ and using (\ref{sys}) and (\ref{tau new}) we have
\begin{align}
{M}  \sigma  &=  \tau  - H-{M} {u}, \nonumber \\
& = \bar{ M}  u+[{ {M}}({{q}_L}) - {\bar { M}}]{\ddot { q}_L} +  H_L - H -{M} {u}. \label{sig 3}
\end{align}
Defining ${K} \triangleq [{K}_P ~ {K}_D]$ and using (\ref{error dyn delayed 2}) we have
\begin{align}
\ddot{{q}}_L&=\ddot{{q}}^d_L-\ddot{{e}}_L =\ddot{{q}}^d_L + K \xi_L- \sigma_L+\Delta  u_L.\label{sig 4}
\end{align}
Substituting (\ref{sig 4}) into (\ref{sig 3}), and after re-arrangement yields
\begin{align}
 \sigma &= \underbrace{{M}^{-1}\bar{{M}}( \Delta  u-  \Delta  u_L)}_{ \chi_1}+\underbrace{{M}^{-1}({M}_L \Delta  u_L-{M} \Delta  u)}_{ \chi_2} \nonumber
\end{align}
\begin{align}
&+\underbrace{{M}^{-1}\lbrace \bar{{M}}\ddot{{q}}^d-({M}-{M}_L+\bar{{M}})\ddot{{q}}^d_L+ H_L - H \rbrace }_{ \chi_3} \nonumber\\
& +\underbrace{{M}^{-1}({M}_L-\bar{{M}}){K} \xi_L}_{ \chi_4}-\underbrace{{M}^{-1}({M}_L-\bar{{M}}) \sigma_L}_{ \chi_5} \nonumber\\
&+\underbrace{({M}^{-1}\bar{{M}}- I){K} \xi}_{ \chi_6}. \label{sig 5}
\end{align}
The following Lemma provides the upper bound of $||  \sigma ||$:
\begin{lem}[\cite{roy2019new, roy2020new}]
Under the condition
\begin{equation}
   ||  E ||= ||  {I} - {M}^{-1}( q)\bar{ M} || <1, \label{E} 
\end{equation}
and the property (\ref{prop}), there exist (unknown) scalars $\delta_{1,2,\cdots,5}$, such that
\begin{align}
|| \chi_{1,2,3} || &\leq \delta_{1,2,3},~|| \chi_{4} || \leq || E   K ||  || \xi || +\delta_4,\\
|| \chi_{5} || &\leq ||{ E}|| || \sigma ||  + \delta_5,~ || \chi_{6} || \leq || {E}{K}|| || \xi|| 
\end{align}
yielding
\begin{align}
 \lVert  \sigma \rVert &\leq \beta_0 + \beta_1 \lVert \xi \rVert, \label{sig bound} \\
 \text{where}~ &\beta_0=\frac{\sum_{i=1}^{5}\delta_i}{1-\lVert  E \rVert}, ~ \beta_1=\frac{2 \lVert  E   K \rVert  }{1-\lVert  E\rVert} \label{beta}.
\end{align}
\end{lem}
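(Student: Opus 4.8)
The plan is to bound $\lVert\sigma\rVert$ directly from the algebraic expansion \eqref{sig 5}, namely $\sigma=\sum_{i=1}^{6}\chi_i$, by estimating each $\chi_i$ separately with the triangle inequality and sub-multiplicativity of the induced norm, and then solving the resulting \emph{implicit} inequality (one in which $\lVert\sigma\rVert$ reappears on the right) for $\lVert\sigma\rVert$. Two structural facts drive every estimate: the uniform positive definiteness \eqref{prop}, which makes $M^{-1}$ a bounded operator with $\lVert M^{-1}\rVert\le 1/\psi_1$; and the identity $M^{-1}\bar M=I-E$, equivalently $M^{-1}\bar M-I=-E$, which lets me expose the matrix $E$ wherever the combination $M^{-1}\bar M$ occurs.

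First I would handle the terms that carry genuine growth in $\lVert\xi\rVert$. In $\chi_6=(M^{-1}\bar M-I)K\xi=-EK\xi$, so $\lVert\chi_6\rVert\le\lVert EK\rVert\,\lVert\xi\rVert$ is immediate. For $\chi_4=M^{-1}(M_L-\bar M)K\xi_L$ I would use the split $M^{-1}(M_L-\bar M)=E+M^{-1}(M_L-M)$: the $E$-part yields an $\lVert EK\rVert\,\lVert\xi\rVert$ contribution (the passage from $\xi_L$ to $\xi$ costing only a bounded offset since $L$ is the small sampling delay), while the residual $M^{-1}(M_L-M)$ is a bounded operator whose effect is collected into $\delta_4$.

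Next come the terms I claim are dominated by constants. $\chi_1=M^{-1}\bar M(\Delta u-\Delta u_L)$ and $\chi_2=M^{-1}(M_L\Delta u_L-M\Delta u)$ involve only the bounded operators $M^{-1}\bar M$, $M^{-1}M_L$ and the adaptive increment together with its one-step-delayed value, whose difference is bounded, giving $\lVert\chi_1\rVert\le\delta_1$ and $\lVert\chi_2\rVert\le\delta_2$. The delicate one is $\chi_3=M^{-1}\{\bar M\ddot q^d-(M-M_L+\bar M)\ddot q^d_L+H_L-H\}$: the desired-acceleration pieces are bounded by Assumption 2, and the genuinely uncertain piece $H_L-H$ is a \emph{difference} of the unknown state-dependent dynamics evaluated one delay apart, which continuity and the smallness of $L$ let me absorb into $\delta_3$. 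Finally, $\chi_5=M^{-1}(M_L-\bar M)\sigma_L$ is split exactly as $\chi_4$; the $E$-part gives $\lVert E\rVert\,\lVert\sigma\rVert$ (up to the delay offset between $\sigma_L$ and $\sigma$), the residual going into $\delta_5$.

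Summing the six estimates gives $\lVert\sigma\rVert\le\sum_{i=1}^{5}\delta_i+2\lVert EK\rVert\,\lVert\xi\rVert+\lVert E\rVert\,\lVert\sigma\rVert$. Here the condition \eqref{E}, $\lVert E\rVert<1$, becomes indispensable: it makes $1-\lVert E\rVert>0$, so I may transpose the $\lVert E\rVert\,\lVert\sigma\rVert$ term and divide, obtaining precisely $\lVert\sigma\rVert\le\beta_0+\beta_1\lVert\xi\rVert$ with $\beta_0,\beta_1$ as in \eqref{beta}. I expect the main obstacle to be exactly the treatment of $\chi_3$ and $\chi_5$: since $H$ is not bounded a priori (Assumption 1), I can never bound $H$ or $\sigma$ themselves, only the one-delay \emph{differences} $H_L-H$ and the delayed error $\sigma_L$, and the justification for folding these into additive constants (rather than into further $\lVert\xi\rVert$-growth) rests on the smallness of the artificial delay $L$ and on the boundedness of the closed-loop signals over the operating interval, with $\lVert E\rVert<1$ doing the final work of closing the self-referential bound.
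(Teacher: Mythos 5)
Your proposal is correct and follows essentially the same route as the paper: term-by-term bounding of the decomposition \eqref{sig 5} using $M^{-1}\bar{M}=I-E$, absorbing the one-delay differences into constants, and then solving the resulting implicit inequality for $\lVert\sigma\rVert$ using $1-\lVert E\rVert>0$. The only cosmetic difference is that the paper (in the analogous quadrotor derivation) formalizes your ``continuity plus smallness of $L$'' step via the explicit representation $\psi_L=\psi(t)-\int_{-L}^{0}\tfrac{d}{d\theta}\psi(t+\theta)\,d\theta$ and the finiteness of integrals of continuous functions over a finite interval, which is precisely the justification you identified as the delicate point for $\chi_3$ and $\chi_5$.
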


The condition \eqref{E}, which is standard in the literature of TDE based controllers \cite{Hsia1991,Ref:6-2, Ref:9_1, Ref:12,Ref:jin,Ref:jin2017model,Ref:roy2017adaptive, lee2016robust, Ref:hum, pi2019adaptive}, gives the criterion to select $\bar{ M}$, which is feasible since upper bound knowledge of ${M}$ is available from Assumption 1. 
\begin{remark}[State-dependent TDE error bound]\label{rem_sig}
The upper bound structure of TDE error $ \sigma$ in \eqref{sig bound} has state-dependency via $\beta_1 ||  \xi ||$, implying $ \sigma$ cannot be considered bounded a priori. \textit{We will show later that assuming a priori boundedness not only sacrifices tracking accuracy, but may cause instability during bipedal walking (cf. Sect. IV.B)}.
\end{remark}

\subsection{Design of the Adaptive Control Law $\Delta  u$} \label{sec gain}
The term $\Delta  u$ is designed as
\begin{align}
&\Delta {{u}} =\alpha c ~\text{sig}({s},\epsilon),  \label{delta u2}
\end{align}
where ${s=B}^T{P}\xi$, $\xi= \begin{bmatrix}
{ e}^T&
\dot{ e}^T
\end{bmatrix}^T$ and ${P>0}$ is the solution of the Lyapunov equation ${A}^T{P+PA=-Q}$ for some ${Q>0}$, where $  A=\begin{bmatrix}
{0} & {I} \\
-{K}_P & -{K}_D
\end{bmatrix}$, ${B}=\begin{bmatrix}
{0}\\
{I}
\end{bmatrix}$; $\alpha \in \mathbb{R}^{+}$ is a user-defined scalar; ${c} \in \mathbb{R}^{+}$ is the overall switching gain tackling $ \sigma$ and $\text{sig}({s},\epsilon) \triangleq {{s}}/{\sqrt{|| {s}||^2+\epsilon}}.$
Here, $\epsilon$ is a small positive scalar used to avoid chattering. 
 


The switching gain $c$ in (\ref{delta u2}) is formulated based on the structure of $|| \sigma ||$ as
\begin{align}
c=\hat{\beta}_0+\hat{\beta}_1 || \xi || , \label{sw gain} 
\end{align}
where $\hat{\beta}_0, \hat{\beta}_1$ are the estimates of $\beta_0, \beta_1 \in \mathbb{R}^{+}$, respectively. 
The gains are evaluated as follows: 
\begin{align}
\dot{\hat{\beta}}_j =
  \begin{cases}
    \gamma_j \lVert \xi \rVert^j \lVert  s \rVert,       &  {\text{if} ~ \text{any}~\hat{\beta}_j \leq \underline{\beta}_j ~ \text{or} ~{s}^T \dot{{s}} >0} \\
    - \gamma_j \lVert \xi \rVert^j \lVert  s \rVert,       & {\text{if} ~ {s}^T \dot{{s}}  \leq 0 ~\text{and all}~  \hat{\beta}_j > \underline{\beta}_j }
  \end{cases}, \label{ATRC} 
\end{align}
\noindent with $\hat{\beta}_j(0) \geq \underline{\beta}_j >0$, $j=0,1$ are user-defined scalars. 

Combining (\ref{input}), (\ref{tarc input}), (\ref{approx}) and (\ref{delta u2}), ARTDE becomes
\begin{equation}\label{theproposedcontrollaw}
\begin{split}
{{\tau }} &= \underbrace {{{{\tau }}_L} - {\bar{ M}}{{{\ddot { q}}}_L}}_{{\text{TDE~part}}}+\underbrace {{\bar{ M}}({{{\ddot { q}}}^d} + {{{K}}_D}{\dot { e}} + {{{K}}_P}{{e}})}_{{\text{Desired~dynamics~injection~part}}}+\underbrace {\alpha {\bar{ M}} c \text{sig}({{s}},\epsilon).}_{{\text{Adaptive-robust~control~part}}}
\end{split}
\end{equation}
\begin{remark}
In \eqref{sig bound}, the term $\beta_0$ can capture the effects of bounded external impulsive forces, while $\beta_1 ||  \xi ||$ can capture state-dependent unmodelled dynamics, particularly, constraint forces (along with uncertain system parameters). Therefore, compared to \cite{pi2019adaptive}, estimating these parameters via the adaptive laws \eqref{ATRC} helps the proposed design avoid design complexities computing constraint forces separately. 
\end{remark}

\begin{algorithm}[!h]

 \caption{ {Design steps of the proposed controller}}
 \vspace{0.2cm}
 
\textbf{Step 1 (Defining the error variables):} Find $e$ via (\ref{tau}); define gains ${K_P}, {K_D}$ and solve for $P$ from (\ref{delta u2}) to compute $s = B^T P\xi$. 

\textbf{Step 2 (Designing adaptive gains):} Using variables from Step 1, compute gain $c$ from (\ref{sw gain}) using the adaptive laws (\ref{ATRC}). 

\textbf{Step 3 (Computing $\tau$):} Select $\overline{M}$ from (\ref{E}); then, using results from Steps 1-2, compute control input $\tau$ from (\ref{theproposedcontrollaw}) after deriving $\hat{N}$ via TDE method (\ref{approx}).

\textbf{Step 4 (Control input to system):} Finally, apply  $\tau$ from Steps 3 to the bipedal for walking.
\vspace{0.2cm}
 \end{algorithm}

\begin{figure}[!t]
		\centering
		\includegraphics[width=1\linewidth]{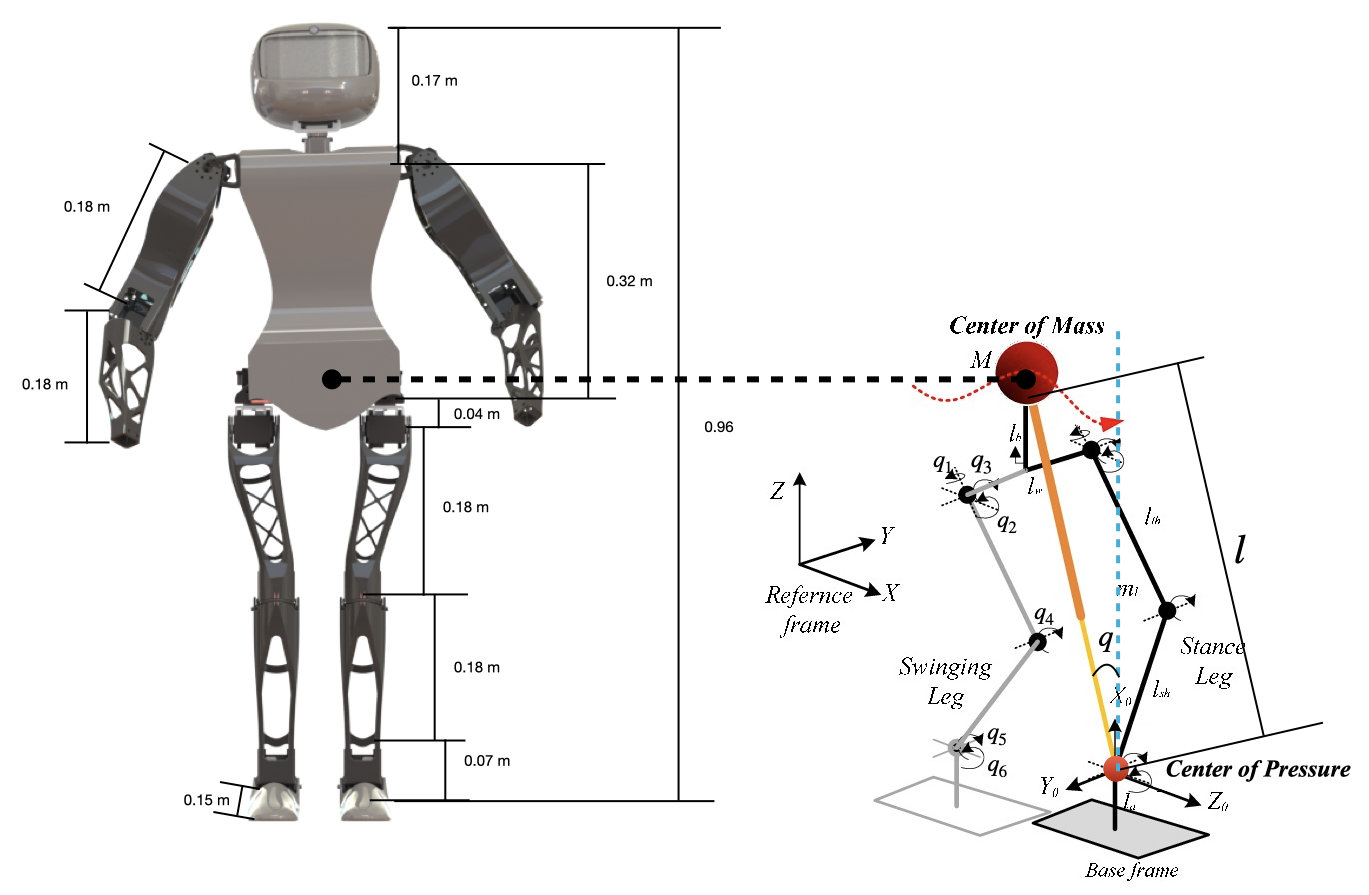}
		\caption{Schematic of Ojas (proposed humanoid).} \label{fig:humanoid}
\end{figure}

\section{Closed-Loop System Stability}\label{sec stability}
The stability analysis of TDARC is carried out utilizing the following Lyapunov function candidate:
\begin{align}
\bar{V}&=V+\frac{(\hat{\beta}_0-\beta_0^{*})^2}{({2 \gamma_0})}+\frac{(\hat{\beta}_1-\beta_1^{*})^2}{({2 \gamma_1})}, \label{lyapunov1} 
\end{align}
where $V(\xi)=\frac{1}{2} \xi^T{P}  \xi$ and $\beta_j^{*} \geq \beta_j(t)>0$ is a constant. For the ease of analysis, we define a region such that
\begin{align}
&\alpha \frac{||{s}||^2}{\sqrt{|| {s}||^2+\epsilon}} \geq ||  s|| 
\Rightarrow  || s|| \geq \sqrt{\frac{\epsilon}{\alpha^2-1}} \triangleq \varphi. \label{region}
\end{align}
The condition (\ref{region}) implies that one needs to select $\alpha>1$, which is always possible since $\alpha$ is a user-defined scalar. The closed-loop system stability is stated in the following theorem:
\begin{theorem}
The system (\ref{sys}) employing TDARC with the controller (\ref{theproposedcontrollaw}), (\ref{ATRC}) is Uniformly Ultimately Bounded (UUB). 
\end{theorem}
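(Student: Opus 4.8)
The plan is to prove UUB by showing that the composite Lyapunov function $\bar V$ of \eqref{lyapunov1} satisfies $\dot{\bar V}<0$ everywhere outside a compact neighbourhood of the origin. First I would recast the closed loop in the coordinates $\xi=[e^T~\dot e^T]^T$: substituting \eqref{input} into \eqref{robotdynamics2} and using $\sigma=\bar M^{-1}(N-\hat N)$ together with \eqref{aux}, the tracking error dynamics collapse to $\dot\xi = A\xi + B(\sigma-\Delta u)$, with $A,B$ as in Section \ref{sec gain}. Differentiating $V=\tfrac12\xi^T P\xi$ and invoking the Lyapunov identity $A^T P+PA=-Q$ then gives $\dot V=-\tfrac12\xi^T Q\xi + s^T(\sigma-\Delta u)$, where $s=B^T P\xi$ is exactly the sliding variable driving the adaptive law.

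Next I would bound the two coupling terms separately. Lemma 1 supplies the state dependent bound $\|\sigma\|\le\beta_0+\beta_1\|\xi\|$, hence $s^T\sigma\le\|s\|(\beta_0+\beta_1\|\xi\|)$. For the robust term I would restrict attention to the region \eqref{region}: since $\alpha>1$, whenever $\|s\|\ge\varphi$ one has $s^T\Delta u=\alpha c\,\|s\|^2/\sqrt{\|s\|^2+\epsilon}\ge c\|s\|=(\hat\beta_0+\hat\beta_1\|\xi\|)\|s\|$. The two contributions therefore combine into $\|s\|(\beta_0-\hat\beta_0)+\|s\|\|\xi\|(\beta_1-\hat\beta_1)$, which is precisely the form the adaptive gain mismatch is designed to absorb.

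I would then append the gain error derivatives $(\hat\beta_j-\beta_j^*)\dot{\hat\beta}_j/\gamma_j$ and treat the two branches of \eqref{ATRC}. In the adaptation-increasing branch, $\dot{\hat\beta}_j=\gamma_j\|\xi\|^j\|s\|$ cancels the $\hat\beta_j$ terms, leaving $\|s\|(\beta_0-\beta_0^*)+\|s\|\|\xi\|(\beta_1-\beta_1^*)\le 0$ because $\beta_j^*\ge\beta_j$, so that $\dot{\bar V}\le-\tfrac12\xi^T Q\xi<0$; this branch simultaneously certifies that $\bar V$, and hence each $\hat\beta_j$, stays bounded. In the adaptation-decreasing branch the sign flips and the residual becomes $\|s\|(\beta_0+\beta_0^*-2\hat\beta_0)+\|s\|\|\xi\|(\beta_1+\beta_1^*-2\hat\beta_1)$; using $\hat\beta_j\ge\underline{\beta}_j$ together with the upper bound on $\hat\beta_j$ just obtained, this residual is bounded by a constant multiple of $\|s\|$ and $\|s\|\|\xi\|$, and with $\|s\|\le\|B^T P\|\,\|\xi\|$ I would reach $\dot{\bar V}\le-\kappa\|\xi\|^2+(\text{bounded terms})$ for some $\kappa>0$, which is negative once $\|\xi\|$ is large. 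Combining both branches with the small residual set $\{\|s\|<\varphi\}$ then yields a compact set outside which $\dot{\bar V}<0$, i.e. UUB.

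The main obstacle I anticipate is exactly the decreasing branch: one must first lock down a uniform upper bound on the estimates $\hat\beta_j$ (harvested from the increasing branch, where $\bar V$ is non-increasing), and only then can the leftover $\|\xi\|^2$-order term be dominated by $-\tfrac12\xi^T Q\xi$. This is where a gain condition relating $\lambda_{\min}(Q)$ (equivalently $K_P,K_D$) to the bounded gain mismatch must be invoked, and where the interplay between $\|s\|$, $\|\xi\|$ and the bounds has to be handled carefully; the region definition \eqref{region} and the lower clipping $\underline{\beta}_j$ in the adaptive law are the two devices that keep this residual controlled.
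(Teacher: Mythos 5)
Your overall architecture coincides with the paper's: the same composite Lyapunov function \eqref{lyapunov1}, the same error dynamics $\dot{\xi}=A\xi+B(\sigma-\Delta u)$, the same region condition \eqref{region}, and the same branch-wise treatment of the adaptive law \eqref{ATRC}. Your analysis of the two branches on the set $\lVert s\rVert\geq\varphi$ reproduces the paper's Cases (i) and (ii), with one avoidable complication: in the decreasing branch the paper does not need any gain condition relating $\lambda_{\min}(Q)$ to the gain mismatch, because there $s^{T}\dot{s}\leq0$ already forces $\lVert s\rVert$ and $\lVert\xi\rVert$ to be bounded, so the residual $2(\beta_0^{*}+\beta_1^{*}\lVert\xi\rVert)\lVert s\rVert$ is a constant $\varsigma$ rather than an $O(\lVert\xi\rVert^{2})$ term that must be dominated by $-\tfrac12\xi^{T}Q\xi$.

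The genuine gap is your treatment of the set $\lVert s\rVert<\varphi$, which you dismiss as a ``small residual set'' to be absorbed at the end. Inside that set the inequality $\alpha\lVert s\rVert^{2}/\sqrt{\lVert s\rVert^{2}+\epsilon}\geq\lVert s\rVert$ fails, so $s^{T}\Delta u\geq c\lVert s\rVert$ is no longer available, $\dot{\bar V}\leq0$ cannot be concluded, and therefore the boundedness certificate for the estimates $\hat{\beta}_j$ that you harvest exclusively from the increasing branch with $\lVert s\rVert\geq\varphi$ does not cover the situation where the increasing branch of \eqref{ATRC} is active while $\lVert s\rVert<\varphi$. If the gains could grow without bound there, both $\bar V$ and the control input would diverge and the UUB claim would collapse; this is precisely the crux of the theorem, not a boundary detail. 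The paper's Case (iii) closes it with a finite-time argument: since $s^{T}\dot{s}>0$ keeps $\lVert s\rVert\geq\delta$ and hence $\lVert\xi\rVert\geq\delta/\lVert B^{T}P\rVert$, the adaptive law guarantees minimum growth rates $\dot{\hat\beta}_0\geq\gamma_0\delta$ and $\dot{\hat\beta}_1\geq\gamma_1\delta^{2}/\lVert B^{T}P\rVert$, so within a finite time $T=\max\{T_1,T_2\}$ the gains satisfy $\alpha\hat{\beta}_j(\delta/\varrho)\geq\beta_j^{*}$, which yields $\dot V<-(\lambda_{\min}(Q)/\lambda_{\max}(P))V$, forces $\lVert\xi\rVert$ and hence $\lVert s\rVert$ to decrease, switches the adaptation to the decreasing branch, and thereby caps the gains. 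You need this (or an equivalent) argument before the cases can be combined into a UUB conclusion.
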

\begin{proof}
Exploring the various combinations of $ \Delta {u}$, the gains $\hat{\beta}_j, j=0,1$ in (\ref{delta u2}), (\ref{ATRC}) and the condition (\ref{region}), the stability of the overall system is analyzed for the following four possible cases using the common Lypaunov function (\ref{lyapunov1}): 

\textbf{Case (i):} $|| {s} || \geq \varphi ~\text{and}  \lbrace \text{any}~\hat{\beta}_j \leq \underline{\beta}_j ~ \text{or} ~{s}^T \dot{{s}}>0 \rbrace$\\
Using the Lyapunov equation ${A}^T{P+PA=-Q}$, the time derivative of (\ref{lyapunov1}) yields
\begin{align}
\dot{\bar{V}} & \leq -(1/2)\xi^{T}{Q} \xi+  {s}^{T} \lbrace-\alpha c ({{s}}/{\sqrt{|| {s}||^2+\epsilon}})+ \sigma \rbrace +({(\hat{\beta}_0-\beta_0^{*})/{\gamma_0})\dot{\hat{\beta}}_0} +({(\hat{\beta}_1-\beta_1^{*})/{\gamma_1})\dot{\hat{\beta}}_1} \nonumber\\ 
& \leq - (1/2)\xi^{T}{Q} \xi- c || {s} || + (\beta_0^{*}+\beta_1^{*}||  \xi ||) || {s}|| + (\hat{\beta}_0-\beta_0^{*})  ||  s ||+(\hat{\beta}_1-\beta_1^{*}) || \xi|| ||  s ||  \nonumber\\ 
& \leq -(1/2) \lambda_{\min}( Q)||{ \xi}||^2 \leq 0,   \label{case 1}
\end{align}

\noindent as $\alpha > 1$. From (\ref{case 1}) it can be inferred that ${\bar{V}}(t) \in \mathcal{L}_{\infty}$ implying $ \xi(t), \hat{\beta}_j(t) \in \mathcal{L}_{\infty} \Rightarrow  \sigma(t),  \Delta {u} \in \mathcal{L}_{\infty}$ for Case (i). 

 \textbf{Case (ii):} $|| {s} || \geq \varphi ~\text{and}~  \lbrace {s}^T \dot{{s}}  \leq 0 ~\text{and all}~  \hat{\beta}_j > \underline{\beta}_j \rbrace $ \\
For this case, the time derivative of (\ref{lyapunov1}) yields
\begin{align}
\dot{\bar{V}} &\leq - (1/2)\xi^{T}{Q} \xi- c || {s} || + (\beta_0^{*}+\beta_1^{*}||  \xi ||) || {s}|| - (\hat{\beta}_0-\beta_0^{*})  ||  s || - (\hat{\beta}_1-\beta_1^{*}) || \xi|| ||  s || \nonumber  \\
& \leq -(1/2) \lambda_{\min}( Q)||{ \xi}||^2 +2 (\beta_0^{*}+\beta_1^{*}||  \xi ||) ||  s ||  .\label{case 2}
\end{align}
In this case we have ${s}^T \dot{{s}} \leq 0$ which implies $|| {s} ||, ||  \xi ||  \in \mathcal{L}_\infty$ (cf. the relation ${s} =  B^T  P  \xi $). Thus, $\exists \varsigma \in \mathbb{R}^{+}$ such that $2 (\beta_0^{*}+\beta_1^{*}||  \xi ||) ||  s || \leq \varsigma $. Further, considering a scalar $z$ as $0 < z < (1/2) \lambda_{\min}( Q) $ one has 
\begin{align}
\dot{\bar{V}} & \leq -\lbrace (1/2) \lambda_{\min}( Q)-z \rbrace||{ \xi}||^2- z||{ \xi}||^2 + \varsigma  .\label{case 2_1}
\end{align}
The gains $\hat{\beta}_1, \hat{\beta}_2 \in \mathcal{L}_{\infty}$ in Case (i) and decrease in Case (ii). This implies $\exists \varpi \in \mathbb{R}^{+}$ such that $\sum_{j=0}^{1}{(\hat{\beta}_j-\beta_j^{*})^2}/{ \gamma_j} \leq \varpi$. Therefore, the definition of $\bar{V}$ in (\ref{lyapunov1}) yields
\begin{equation}
\bar{V} \leq   \lambda_{\max} ({P}) ||{ \xi}||^2 + \varpi. \label{V}
\end{equation}
Using the relation (\ref{V}), (\ref{case 2_1}) can be written as
\begin{equation}
\dot{\bar{V}}  \leq - \upsilon \bar{V} - z||{ \xi}||^2 + \varsigma+ \upsilon \varpi, \label{V_dot}
\end{equation}
where $\upsilon \triangleq ({\frac{1}{2} \lambda_{\min}( Q)-z})/{\lambda_{\max} ({P})} $. Hence, $\dot{\bar{V}} <0$  would be achieved when $||{ \xi}|| \geq \sqrt{({\varsigma+ \upsilon \varpi})/{z}} $. 

\par \textbf{Case (iii):} $ || {s} || < \varphi ~\text{and} \lbrace \text{any}~\hat{\beta}_j \leq \underline{\beta}_j ~ \text{or} ~{s}^T \dot{{s}}>0 \rbrace $  \\
The fact $|| {s} || < \epsilon$ implies that $\exists \bar{\epsilon} \in \mathbb{R}^{+}$ such that $||  \xi || \leq \bar{\epsilon} $ from the relation ${s}={B}^T {P}  \xi$.  Using (\ref{delta u2}) we have
\begin{align}
\dot{\bar{V}} &\leq -({1}/{2})\xi^{T}{Q} \xi+ {s}^{T} \lbrace-\alpha c ({{s}}/{\sqrt{|| {s}||^2+\epsilon}})+ \sigma \rbrace +({(\hat{\beta}_0-\beta_0^{*})/{\gamma_0})\dot{\hat{\beta}}_0} +({(\hat{\beta}_1-\beta_1^{*})/{\gamma_1})\dot{\hat{\beta}}_1} \nonumber\\
& \leq -(1/2) \lambda_{\min}({Q})|| \xi||^2+ (\hat{\beta}_0+\hat{\beta}_1||  \xi ||) ||  s ||. \label{case 3}
\end{align}
Unlike Case (i), proving boundedness of $\hat{\beta}_j$ in Case (iii) demands that $\hat{\beta}_j$s start decreasing in a finite time, i.e., ${s}^T \dot{{s}} \leq 0$ should occur (from the second law of (\ref{ATRC})) in a finite time. For this, we need to investigate only the evaluation of $V$, where gains only increase implying $\hat{\beta}_j > \underline{\beta}_j$. The condition ${s}^T \dot{{s}} >0$ in Case (iii) implies $|| {s} ||$ is increasing; thus $\exists \delta \in \mathbb{R}^{+}$ such that $|| {s} || \geq \delta$. Further, using $|| {s} || \leq || {B}^T {P}|| ||  \xi ||$ we have
\begin{equation}
\delta \leq || {s} ||  \leq || {B}^T {P}|| ||  \xi || \Rightarrow ||  \xi || \geq \delta / || {B}^T {P}|| . \label{lem p1}
\end{equation}
Then, using (\ref{lem p1}), the adaptive law (\ref{ATRC}) yields 
\begin{equation}
\dot{\hat{\beta}}_0 \geq \gamma_0 \delta, ~\dot{\hat{\beta}}_1 \geq (\gamma_1 \delta^2 ) / || {B}^T {P}||. \label{lem p2}
\end{equation}
Using (\ref{ATRC}) and the fact $||  s || < \varphi$ for Case (iii), the time derivative of $V(\xi)=(1/2) \xi^T{P}  \xi$ for Case (iii) yields
\begin{align}
\dot{{V}} &\leq -({1}/{2}) \lambda_{\min}( Q)|| \xi||^2+ {s}^{T} \lbrace -\alpha c ({{s}}/{\sqrt{|| {s}||^2+\epsilon}})+ \sigma \rbrace  \nonumber\\
& \leq -({ \lambda_{\min}( Q)}/{ \lambda_{\max}( P)})V + ({\beta}_0^{*}+{\beta}_1^{*} ||  \xi ||)|| {s} || - \alpha ( \hat{\beta}_0 + \hat{\beta}_1 \lVert  \xi \rVert ) ({ \delta || {s} ||}/{\sqrt{\varphi^2+\epsilon}}) . \label{lem p3}
\end{align}
If $||  \xi ||$ decreases, then it would also ensure that $||{s}||$ decreases (i.e., ${s}^T \dot{{s}} <0$) as ${s}={B}^T {P}  \xi$. Consequently, $\hat{\beta}_0, \hat{\beta}_1$ start decreasing following (\ref{ATRC}) and hence they would remain bounded individually. This feature can be realized if $\dot{V} < -\frac{ \lambda_{\min}( Q)}{ \lambda_{\max}( P)}V$ is established.
Such condition can be achieved from (\ref{lem p3}) when 
\begin{align}
\alpha \hat{\beta}_0(\delta/ \varrho) \geq \beta_0^{*}, ~~\alpha \hat{\beta}_1(\delta/ \varrho) \geq \beta_1^{*}, \label{lem p4}
\end{align}
where $\varrho \triangleq \sqrt{\varphi^2+\epsilon}$. Since (\ref{lem p2}) defines the minimum rates of increments, (\ref{lem p4}) is satisfied within finite times $T_1,T_2$ where 
\begin{align}
T_1 \leq ({\varrho \beta_0^{*}})/({\alpha \gamma_0 \delta^2}),~~T_2 \leq ({\varrho \beta_1^{*} || {B}^T {P}||})/({\alpha \gamma_1 \delta^3}). \label{lem p5}
\end{align}
Therefore, the exponential decrease of $||  \xi ||$ and subsequent boundedness of $\hat{\beta}_j $ and $ \hat{\beta}_1$ is achieved within a finite time $T=\max\lbrace T_1~T_2 \rbrace$.
In addition,  $||  s || < \varphi$ in Case (iii) implies $||  \xi || \in \mathcal{L}_\infty$ and consequently $(\hat{\beta}_0+\hat{\beta}_1||  \xi ||) ||  s || \leq \varpi_1$, where $\varpi_1 \in \mathbb{R}^{+}$. Using these results and the procedure in (\ref{V_dot}), the relation (\ref{case 3}) can be written as
\begin{align}
\dot{\bar{V}} & \leq -\upsilon \bar{V}-  z||{ \xi}||^2 +  \varpi_1. \label{case 3_1}
\end{align}
Hence, $\dot{\bar{V}} <0$ would be established when $|| \xi|| \geq \sqrt{{\varpi_1}/{z}}$.

\textbf{Case (iv):} $|| {s} || < \varphi ~\text{and}~  \lbrace {s}^T \dot{{s}}  \leq 0 ~\text{and all}~  \hat{\beta}_j > \underline{\beta}_j \rbrace $ \\
\noindent Similarly, for this case
\begin{align}
\dot{\bar{V}} &\leq -({1}/{2})\xi^{T}{Q}  \xi+ {s}^{T} \lbrace -\alpha c ({{s}}/{\sqrt{|| {s}||^2+\epsilon}})+ \sigma \rbrace - ({(\hat{\beta}_0-\beta_0^{*})/{\gamma_0})\dot{\hat{\beta}}_0} - ({(\hat{\beta}_1-\beta_1^{*})/{\gamma_1})\dot{\hat{\beta}}_1} \nonumber\\
& \leq  -(1/2) \lambda_{\min}({Q})|| \xi||^2 +2 (\beta_0^{*}+\beta_1^{*}||  \xi ||) ||  s ||.\label{case 4}
\end{align}
This case can be analyzed exactly like Case (ii).

The stability results from Cases (i)-(iv) reveal that the closed-loop system is UUB. 
\end{proof}




\section{Verification of the Proposed ARTDE}

\subsection{Simulation Setup}
To verify the performance of the proposed controller, a $20$ DoFs humanoid named Ojas (cf. Fig. \ref{fig:humanoid} for detailed mechanical structure), weighing $8.986$ kg and $0.94$ m tall from feet to head, is designed: each leg and arm of the robot has $6$ and $4$ DoFs, respectively. We relied on the high fidelity simulator Pybullet for verification of the proposed controller. 

The objective is Ojas should maintain a desired walking motion in the face of various uncertainties. For this purpose, only the 6 joints of each leg, 3 hip joints, 1 knee joint and 2 angle joints, while other joints of the robot are kept fixed (i.e., fixed upper-body). Thus, twelve joints are operated simultaneously for both legs. Subsequently we denote $q_1=$ yaw hip joint, $q_2=$ roll hip joint, $q_3=$ pitch hip joint, $q_4=$ knee joint, $q_5=$ pitch ankle joint and $q_6=$ roll ankle joint. Following \cite{park2001impedance}, the desired walking motion is generated (cf. Fig. \ref{fig:footTraj}) via the desired trajectories for the six joints for each leg as in Fig. \ref{fig:jointTraj}, leading to a walking speed $0.2$ m/s with $1$ s step period and stride of $0.1$ m.
\begin{figure}[!t]
		\centering
		\includegraphics[width=\linewidth]{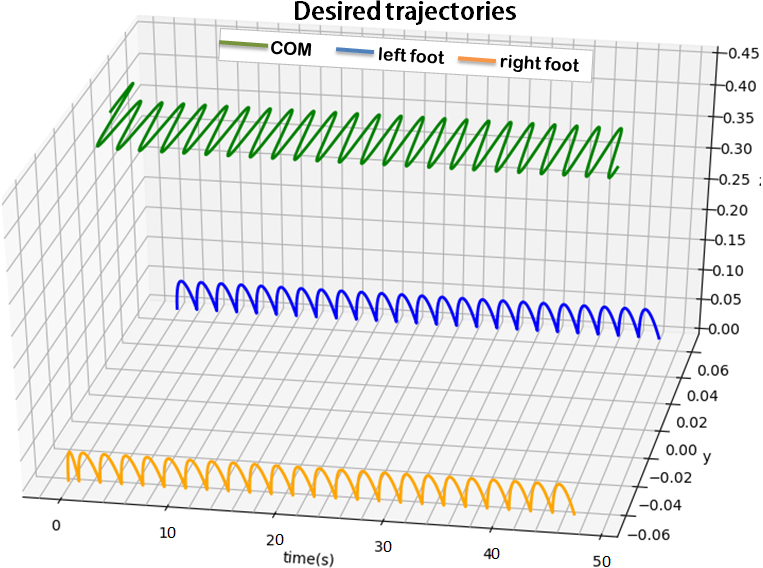}
		\caption{Desired trajectories of Center of mass (COM) and feet.}
		\label{fig:footTraj}
\end{figure}
\begin{figure}[!t]
		\centering
		\includegraphics[width=\linewidth]{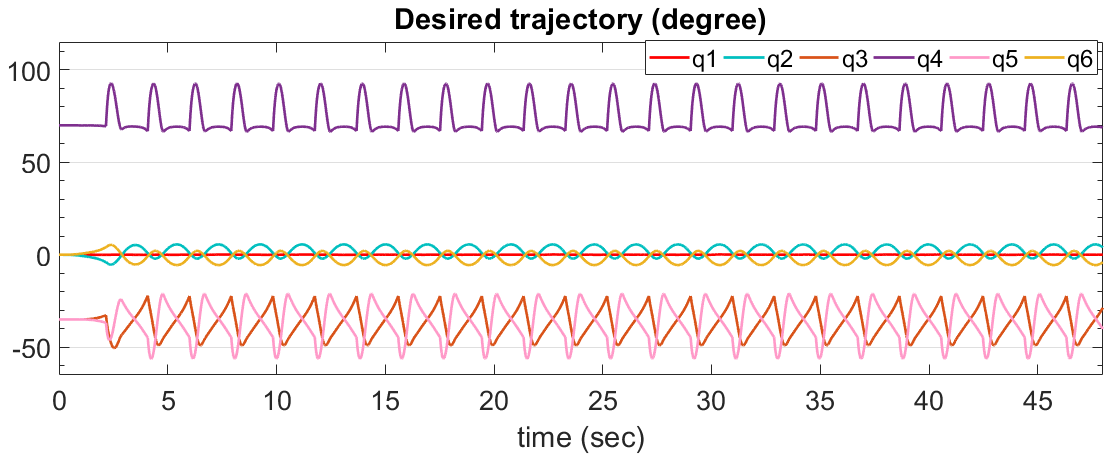}
		\caption{Desired leg joint trajectories (same for both the legs).}
		\label{fig:jointTraj}
\end{figure}

\begin{figure}[!t]
		\centering
		\includegraphics[width=.9\linewidth]{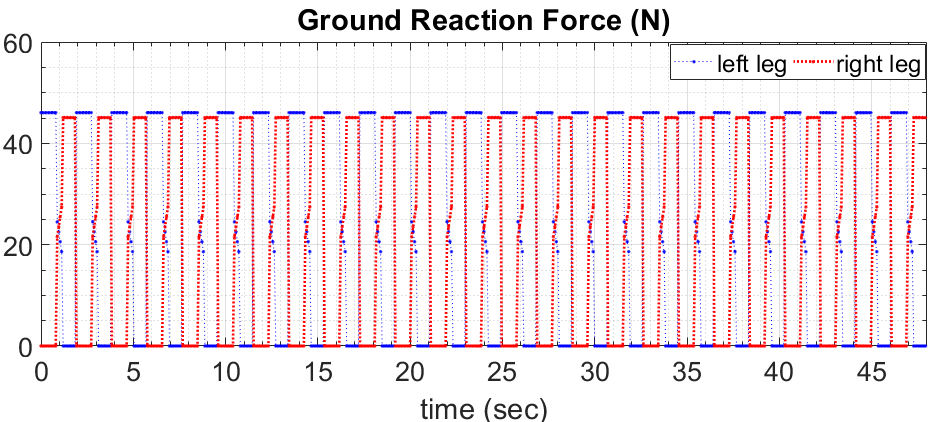}
		\caption{Ground reaction force on the legs.}
		\label{fig:grf}
\end{figure}
\subsection{Simulation Scenario, Results and Analysis}
{To properly judge the effectiveness of the proposed ARTDE scheme against state-of-the-art, we consider the classical TDE \cite{Hsia1991} (control law \eqref{tau} with $\Delta {u =0} $) and the adaptive TDE biped controller \cite{pi2019adaptive} (called ATDE henceforth), with $\Delta {u}$ as (\ref{delta u2}) and the following adaptive law for $c$  
\begin{align}
\dot{c}=\begin{cases}
    \gamma_0 ||  s ||,       &  \text{if} ~ c \leq \underline{\beta}_0 ~ \text{or} ~(||{s}|| - || {s}_L||) >0 \\
    -  \gamma_0 ||  s ||,      & \text{if} ~ (||{s}|| - || {s}_L||) \leq 0
  \end{cases}. \label{gain_old}
\end{align}}
Three simulation scenarios, S1, S2 and S3, are considered in following subsections with same control design parameters as: $\bar{ M}=0.042  I$ (kgm$^2$), ${K}_P =25  I$, $  K_D=10  I$, $L=0.001$ sec,
${Q =I}$, $\epsilon=5\times10^{-5}$, $\alpha=4,\underline{\beta}_j=0.01$, $\hat{\beta}_j(0)=0.01$, $j=0,1$. {For parity in the  comparison, same values of $\bar{ M}$, $ K_P, K_D, L$ and $\alpha, \gamma_0, \underline{\beta}_0$ are selected for the TDE and ATDE (\ref{gain_old}). }

For all scenarios, a ground reaction force (GRF) is created following the model \cite{6246919} which acts as impulsive external disturbance whenever the foot lands on the ground while walking (cf. Fig. \ref{fig:grf}) and it is considered to be unknown for all the controllers. Due to structural symmetry in Ojas, only the results for the right leg are presented to avoid repetition.

\subsubsection{Description of Scenario S1}
This scenario tests the capability of various controllers to adapt to the dynamic changes in the desired walking trajectory under the GRF (cf. Fig. \ref{fig:grf}), which creates a significant nonlinearity while propagating throughout the body.

\textit{Results and Discussion for S1:} 
The tracking performance of TDE, ATDE and the proposed ARTDE are demonstrated via Fig. \ref{fig:track_s1} and further collected in Table \ref{table 3} in terms of root mean squared (RMS) error, maximum absolute error (MAE) and RMS $ \tau$. Spikes can be noted in every error plots whenever the GRF of the two legs overlap around their peaks (cf. Figs. \ref{fig:grf} and \ref{fig:track_s1}): this indicates both the feet are on ground (double support phase) and the state-dependent constraint forces are in effect. The significantly lower peaks for ARTDE (cf. the MAE data in Table \ref{table 3}) and  \emph{minimum performance improvements} of $29.1\%$ and $20.6\%$ in terms RMSE as compared to TDE and ATDE respectively across all the joints, clearly demonstrate its capability to handle state-dependent forces compared to others. A few snapshots of the walking motion using ARTDE is shown in Fig. \ref{fig:S1_snap}.


\begin{table}[!t]
\renewcommand{\arraystretch}{1.0}
\caption{{Performance Comparison for Scenario S1}}
\label{table 3}
\centering
{
{	\begin{tabular}{c c c c c c }
		\hline
		\hline
		\multirow{2}{*}{Joints} & \multicolumn{3}{c}{Controller} & \multicolumn{2}{c}{Performance Improvement} \\
		\cline{2-6}
		& TDE & ATDE  & {Proposed}  & over TDE & over ATDE   \\
		\hline
& \multicolumn{5}{c}{RMS error (degree)} \\ \cline{1-6}
        \hline
		$q_1$ & 0.108& 0.035  & {0.024} &{77.7\% }& {31.4\% }  \\
		$q_2$ &0.653 &0.551  & {0.436} & {29.1\% }& {20.8\% }  \\
		$q_3$ & 2.231& 1.847 & {1.455} & {34.7\% }& {21.2\% }  \\
		$q_4$ & 3.264& 2.671 & {2.072} & {36.5\% }& {22.4\% }  \\
		$q_5$ & 3.254& 2.741 & {2.190} & {32.6\% }& {20.6\% }   \\
		$q_6$ & 1.147& 1.023 & {0.741} & {35.3\% }& {27.5\% }  \\
		\hline
		& \multicolumn{5}{c}{MAE (degree)} \\ 
		\hline
		$q_1$ & 0.173& 0.225  & {0.075} & {56.65\%} & {66.67\%} \\
		$q_2$ & 1.841& 0.945  & {0.746}& {59.48\%}& {21.06\%} \\
		$q_3$ & 2.456& 1.889 & {1.447} & {41.08\%}& {23.40\%} \\
		$q_4$ & 8.7415& 7.483 & {5.407} & {38.15\%}& {27.74\%} \\
		$q_5$ & 8.660& 6.601 & {4.656} & {46.24\%}& {29.47\%} \\
		$q_6$ & 5.976& 6.190 & {2.415} & {59.59\%}& {60.99\%} \\
		\hline
		& \multicolumn{5}{c}{RMS $\tau$ (Nm)} \\
		\cline{1-6}
		$q_1$ & 2.864&  3.076 & {2.826}&1.33\%&8.13\%\\
		$q_2$ & 3.964 & 4.176  & {4.022}&-1.46\%&3.69\%\\
		$q_3$ & 5.965 & 6.003 & {5.883}&1.37\%&2.00\%\\
		$q_4$ & 5.901 & 5.783 & {5.532}&6.25\% &4.34\%\\
		$q_5$ & 7.764 & 7.861  & {7.832}&-2.16\%&-0.90\%\\
		$q_6$ & 3.133 & 3.479 & {3.148}&-0.48\%&9.51\%\\
		\hline
		\hline
\end{tabular}}}
\end{table}
\begin{figure}[!t]
		\centering
		\includegraphics[width=\columnwidth,height=\paperwidth]{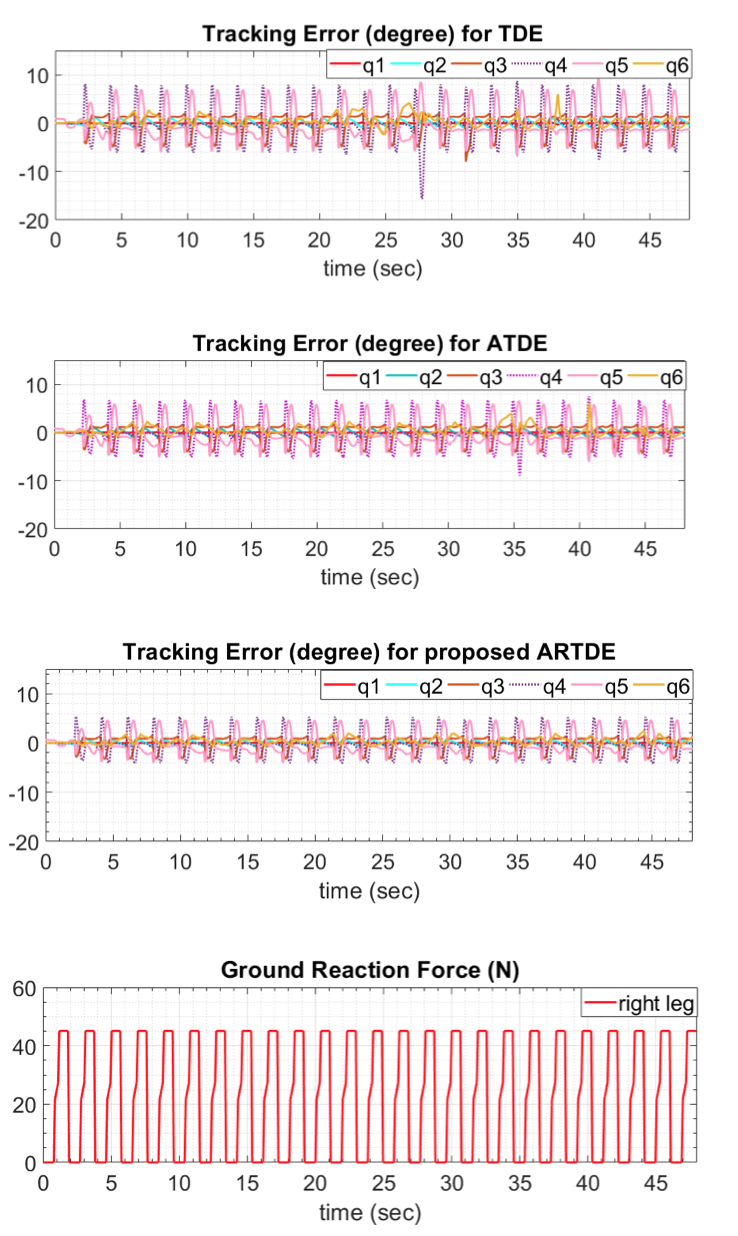}
		\caption{Tracking error for scenario S1.}\label{fig:track_s1}
\end{figure}

\begin{figure}[!t]
		\centering
		\includegraphics[width=1\columnwidth]{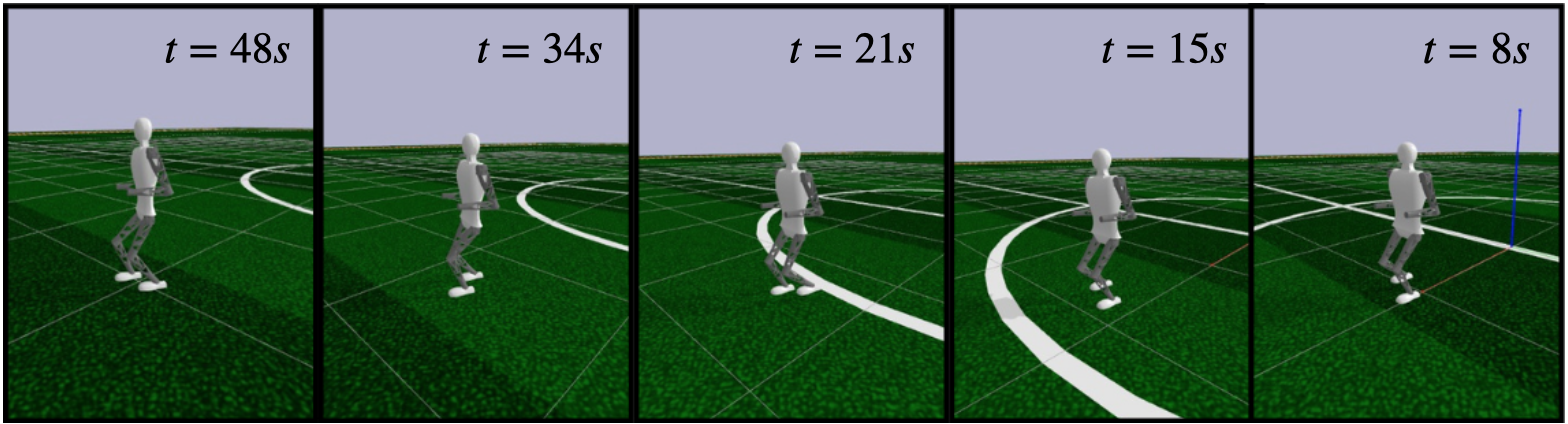}
		\caption{The snapshots from scenario S1 with proposed controller.}
		\label{fig:S1_snap}
\end{figure}

\subsubsection{Description of Scenario S2}
In this second scenario, Ojas is required to follow the same walking motion via Fig. \ref{fig:jointTraj}, but now, while carrying a payload of $1.3$ kg mass (approx. $15\%$ of robot's mass) and using various controllers. 

\begin{figure*}[!h]
		\centering
		\includegraphics[width=\linewidth]{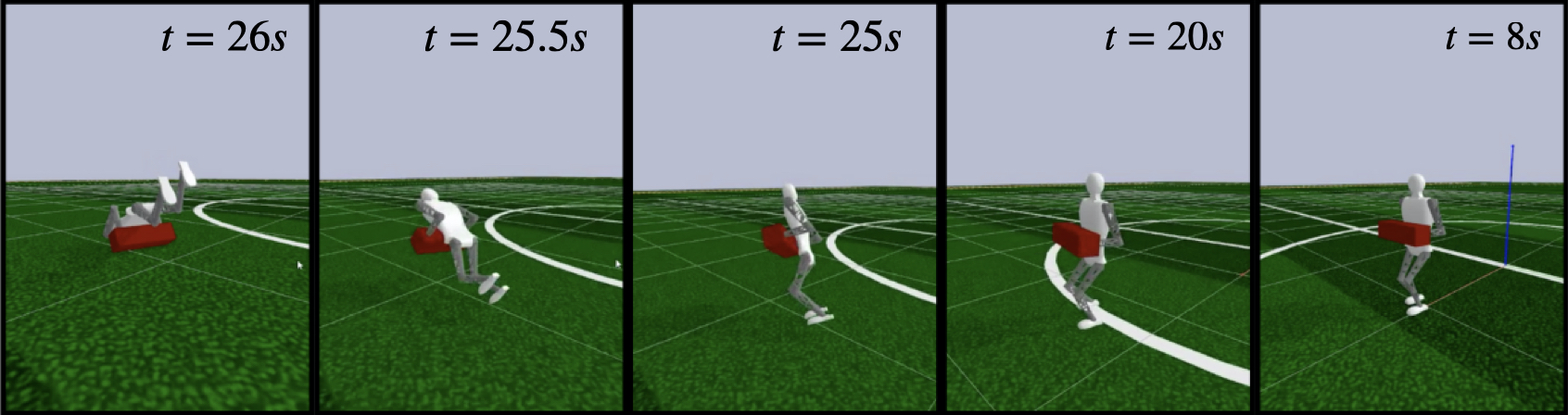}
		\caption{The snapshots from scenario S2 of walking with payload using TDE.}
		\label{fig:s2_snap_TDE}
\end{figure*}

\begin{figure*}[!h]
		\centering
		\includegraphics[width=\linewidth]{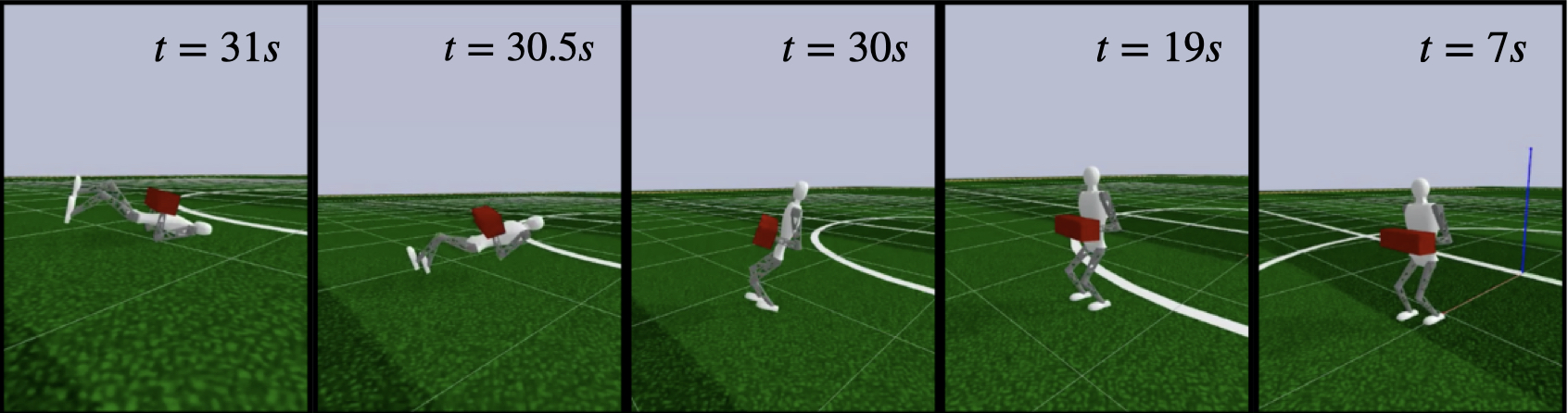}
		\caption{The snapshots from scenario S2 of walking with payload using ATDE.}
		\label{fig:s2_snap_ATDE}
\end{figure*}

\begin{figure*}[!h]
		\centering
		\includegraphics[width=\linewidth]{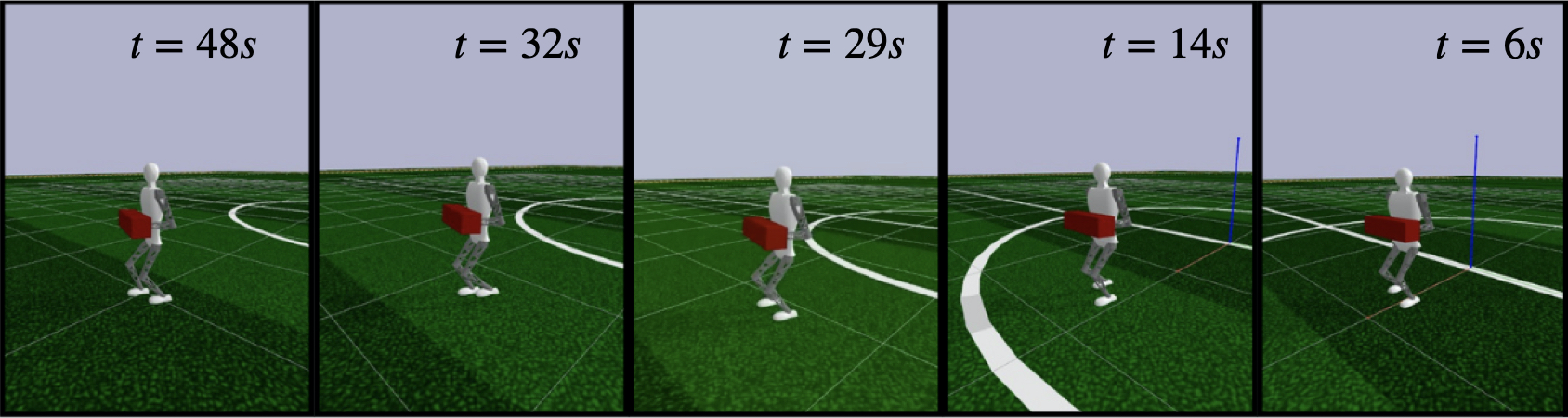}
		\caption{The snapshots from scenario S2 of walking with payload using proposed ARTDE.}
		\label{fig:s2_snap_ARTDE}
\end{figure*}

\textit{Results and Discussion for S2:} 
Unlike scenario S1, Figs. \ref{fig:s2_snap_TDE}-\ref{fig:track_s2} reveal that, while carrying the payload, the robot falls with TDE after $26$ steps (at $t=26$ sec) and with ATDE after $31$ steps (at $t=31$ sec) (cf. the sudden spikes in Fig. \ref{fig:track_s2}). Whereas, the proposed ARTDE could perform the task successfully (cf. Fig. \ref{fig:s2_snap_ARTDE}). This shows that TDE and ATDE, built on the assumption of a priori bounded uncertainty, may fail in presence of state-dependent uncertainty. Since TDE and ATDE have shown destabilizing behaviour, only the performance of ARTDE is given in Table \ref{table 4}.

\begin{figure}[!h]
		\centering
		\includegraphics[width=\linewidth]{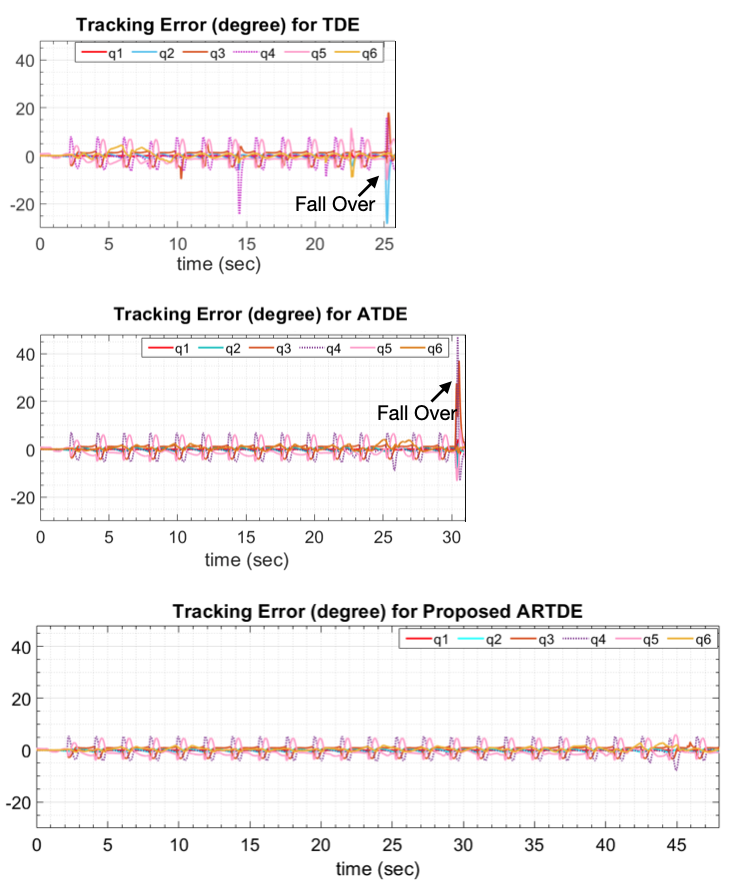}
		\caption{TDE and ATDE fails to stabilize the robot, which falls over at 26s and 31s respectively.} \label{fig:track_s2}
\end{figure}

\begin{table}[!h]
\renewcommand{\arraystretch}{1.0}
\caption{{Performance of the proposed controller under scenario S2}}
\label{table 4}
\centering
{
{	\begin{tabular}{c c c c }
		\hline
		\hline
		{Joints}
		& {RMS error (degree)} & {MAE (degree)} &  {RMS $\tau$ (Nm)}\\
		\hline
		\hline
        \cline{1-4}
		$q_1$ &0.026 &0.063&2.826 \\
		$q_2$ &0.436 & 0.747&4.006\\
		$q_3$&1.466 & 3.140&5.882\\
		$q_4$ &2.120 &5.407&5.657 \\
		$q_5$ &2.205 &5.944&7.974 \\
		$q_6$ &0.777 &2.795&3.253\\
		\hline
		\hline
\end{tabular}}}
\end{table}

\subsubsection{Description of Scenario S3}
To further verify the robustness property of the proposed ARTDE, this scenario is constructed with the following phases: 
\begin{itemize}
    \item[(i)]In phase 1 ($0 \leq t <28$), Ojas walks under similar condition of scenario S2. 
    \item[(ii)]In phase 2 ($28 \leq t <37$), an external impulsive push of $10$ N is applied on the chest (cf. second snapshot in Fig. \ref{fig:s3_snap}) at $t=28$ sec while it was walking. 
    \item[(iii)]In phase 3 ($t \geq 37$), another impulsive push of $10$ N is applied at $t=37$ sec (cf. fourth snapshot in Fig. \ref{fig:s3_snap}), but now to the left arm ($45^{0}$ to the $z$ axis).
\end{itemize}
\textit{Results and Discussion for S3:} The tracking performance of ARTDE as in Fig. \ref{fig:track_s3} clearly highlights the robustness of the proposed design against external disturbances. Further, comparison of RMS error in Tables \ref{table 4} and \ref{table 5} highlights ARTDE provides good repeatability, while higher MAE in $q_2$ and $q_4$ joints stem from the impulsive pushes.

  \begin{figure}[!h]
		\centering
		\includegraphics[width=\linewidth]{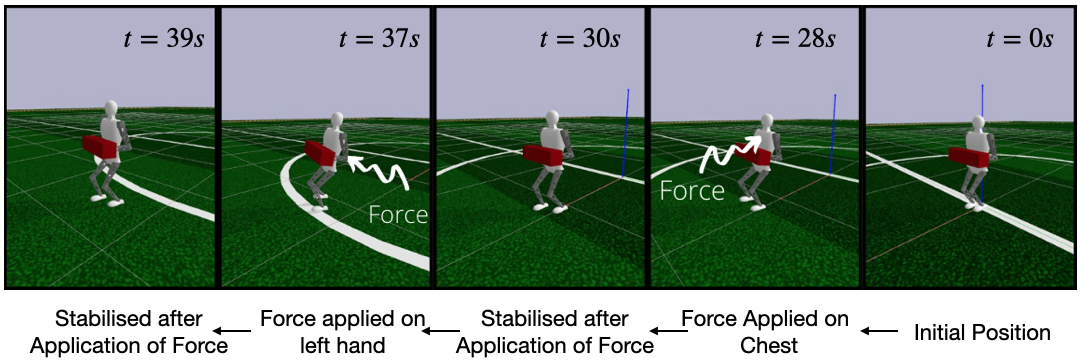}
		\caption{Snapshots from scenario S3.}
		\label{fig:s3_snap}
\end{figure}

\begin{figure}[!h]
		\centering
		\includegraphics[width=\linewidth]{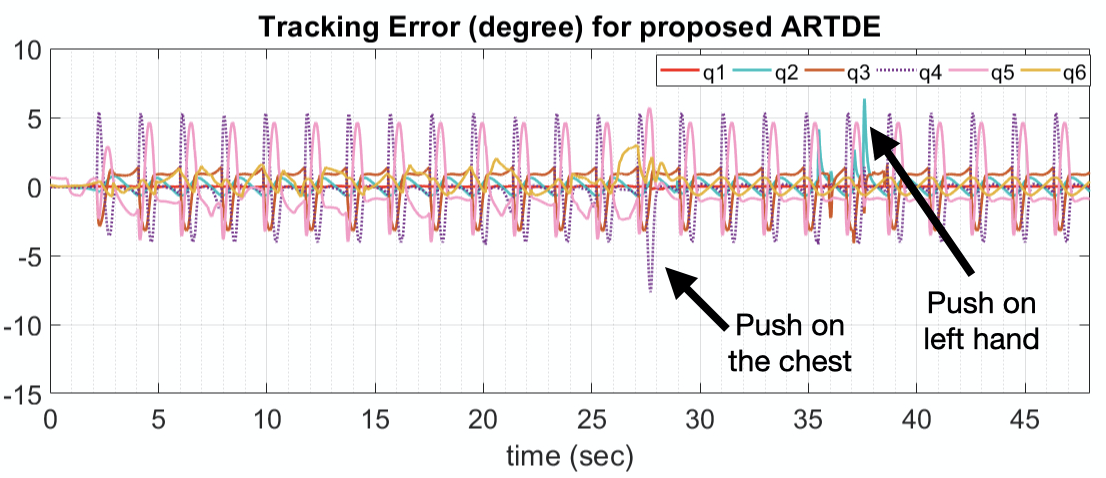}
		\caption{Tracking error of scenario S3.} \label{fig:track_s3}
\end{figure}

\begin{table}[!t]
\renewcommand{\arraystretch}{1.0}
\caption{{Performance of the proposed controller under scenario S3}}
\label{table 5}
\centering
{
{	\begin{tabular}{c c c c c c c}
		\hline
		\hline
		 &\multicolumn{3}{c}{RMS error (degree)}& \multicolumn{3}{c}{MAE (degree)}  \\
		\hline
		{Joints} 
		& Phase1 & Phase2  & Phase3& Phase1 & Phase2  & Phase3  \\
		\hline
		$q_1$ & 0.025& 0.026  & 0.025 & 0.065& 0.062  & 0.063  \\
		$q_2$ & 0.436& 0.441  & 0.497 & 0.745& 4.118  & 6.723\\
		$q_3$ & 1.460& 1.456 & 1.462  & 3.142& 4.008 & 3.802\\
		$q_4$ & 2.118& 2.320 & 2.108  & 5.361& 7.864 & 5.853\\
		$q_5$ & 2.184& 2.243 & 2.182&5.980 & 5.801 & 5.701 \\
		$q_6$ & 0.763& 0.862 & 0.770 & 2.008 & 3.061 & 2.893 \\
		\hline
        
     & \multicolumn{3}{c}{RMS $\tau$ (Nm)} \\
        \cline{1-4}
		\hline
		$q_1$ & 2.824 & 2.912 &2.802 \\
		$q_2$ & 3.924 & 4.004  & 4.964 \\
		$q_3$ & 5.863 & 5.912 & 5.904\\
		$q_4$ & 5.351 & 5.801 &5.368\\
		$q_5$ & 7.78 & 7.981  & 7.763 \\
		$q_6$ & 4.253 & 3.837 & 3.568 \\
		\hline
		\hline
\end{tabular}}}
\end{table}

\chapter{Adaptive Artificial Time Delay Control for Quadrotors under State-dependent Unknown Dynamics}
\label{quad_chap}

The simplicity in implementation and the significantly low computation burden helped TDE-based methods to find remarkable acceptance in the control literature of robotics in the past decade \cite{Ref:jin2017model,Ref:roy2017adaptive, pi2019adaptive, brahmi2018adaptive, lee2019adaptive, lim2019delayed, roy2020new} including in quadrotors \cite{lee2012experimental, wang2016model, dhadekar2021robust}, showing improved performances compared to conventional methods of robust and adaptive control. Yet, a formal stability analysis of TDE-controlled quadrotors with state-dependent uncertainities is missing. Therefore, a relevant question arises whether \textit{the existing TDE methods can tackle the unknown state-dependent uncertainties in quadrotors}, as, left unattended, such uncertainties can cause instability \cite{kocer2018centralized}. Unfortunately, the TDE methods for quadrotors \cite{lee2012experimental, wang2016model, dhadekar2021robust} (and relevant references therein) rely on the assumption of a priori bounded approximation error (a.k.a. \textit{TDE} error), which is quite common in TDE literature \cite{ Ref:jin2017model,Ref:roy2017adaptive, pi2019adaptive, lee2019adaptive, brahmi2018adaptive, lim2019delayed} and hence, these methods are conservative for quadrotors to negotiate state-dependent uncertainties (please refer to Remark 4 later). Further, being an underactuated system, the adaptive TDE works \cite{Ref:jin2017model,Ref:roy2017adaptive, pi2019adaptive, lee2019adaptive, brahmi2018adaptive, roy2020new} are not directly applicable to a quadrotor. 

In light of the above discussions, artificial delay based adaptive control for quadrotors, which is also robust against \textit{unknown state-dependent uncertainty} is still missing. In this direction, this work has the following major contributions:

\begin{itemize}
\item The proposed adaptive TDE method for a quadrotor system, to the best of the authors' knowledge, is a first of its kind, because the existing TDE methods for quadrotors \cite{lee2012experimental, wang2016model, dhadekar2021robust} are non-adaptive solutions.
    \item Unlike the existing adaptive TDE solutions \cite{Ref:jin2017model,Ref:roy2017adaptive, pi2019adaptive, brahmi2018adaptive, lee2019adaptive}, the proposed adaptive TDE method can provide \emph{robustness} (hence termed as adaptive-robust TDE, ARTDE) against state-dependent unmodelled dynamics. 
    \item The closed-loop system stability is established analytically. Further, experimental results suggest significant improvement in tracking accuracy for the proposed method compared to the state of the art.
\end{itemize}

The rest of the chapter is organised as follows: Section 3.2 describes the dynamics of quadrotor;
Section 3.3 details the proposed control framework, while corresponding stability analysis is provided
in Section 3.4; comparative simulation results are provided in Section 3.5.

\section{Quadrotor System Dynamics}
\begin{figure}[!h]
    \includegraphics[width=2 in, height=2in]{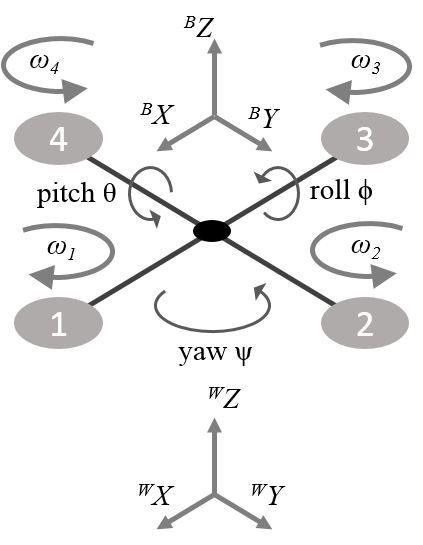}
    \centering
    \caption{Schematic of a quadrotor and the coordinate frames.}
    \label{fig:quad_axis}
\end{figure}
	The well established Euler-Lagrangian system dynamics of a quadrotor model (cf. Fig. \ref{fig:quad_axis}) is given by \cite{bialy2013lyapunov}
	\begin{subequations}\label{dyn}
	\begin{align}	
	&	m \ddot{p}(t) + G +  d_p(p(t), \dot{p}(t),t) = \tau_p(t), \label{p_tau} \\
	&	J (q(t))\ddot{q}(t) + C_q(q(t),\dot{q}(t))\dot{q}(t) +  d_q(q,\dot{q},t) = \tau_q(t), \label{q_tau} \\
	&	\tau_p (t) = R_B^W (q)U(t), \label{tau_conv}
	\end{align}
	\end{subequations}
	where (\ref{p_tau}) and (\ref{q_tau}) represent the position dynamics and the attitude dynamics, respectively; (\ref{tau_conv}) converts the input vector $\tau_p \in \mathbb{R}^3$ in Earth-fixed frame to $U\triangleq
	\begin{bmatrix}
	0 & 0 & u_1
	\end{bmatrix}^T\in \mathbb{R}^3$ in body-fixed frame via the $Z$-$Y$-$X$ Euler angle rotational matrix $R_B^W$	given by
	\begin{align}
	R_B^W =
	\begin{bmatrix}
	c_{\psi}c_{\theta} & c_{\psi}s_{\theta}s_{\phi} - s_{\psi}c_{\phi} & c_{\psi}s_{\theta}c_{\phi} + s_{\psi}s_{\phi} \\
	s_{\psi}c_{\theta} & s_{\psi}s_{\theta}s_{\phi} + c_{\psi}c_{\phi} & s_{\psi}s_{\theta}c_{\phi} - c_{\psi}s_{\phi} \\
	-s_{\theta} & s_{\phi}c_{\theta} & c_{\theta}c_{\phi}
	\end{bmatrix}, \label{rot_matrix}
	\end{align}
	where $c_{(\cdot)} , s_{(\cdot)}$ are abbreviations for $\cos{(\cdot)} , \sin{(\cdot)}$ respectively. Various other symbols in (\ref{dyn}) are described as follows: the mass and inertia matrix are represented by $m \in \mathbb{R}^{+}$ and $J(q) \in \mathbb{R}^{3 \times 3}$ respectively; the center-of-mass of the quadrotor is denoted by the position vector $p\triangleq
	\begin{bmatrix}
	x & y & z
	\end{bmatrix}^T \in\mathbb{R}^3$; the orientation/attitude (roll, pitch, yaw angles respectively) is denoted via $q\triangleq \begin{bmatrix}
	\phi & \theta & \psi
	\end{bmatrix}^T\in \mathbb{R}^3$; $G\triangleq
	\begin{bmatrix}
	0 & 0 & mg
	\end{bmatrix}^T\in \mathbb{R}^3$ denotes the gravitational force vector with gravitational constant $g$; $C_q(q, \dot{q}) \in \mathbb{R}^{3 \times3}$ is the Coriolis matrix; the unmodelled disturbances, which can be both state and time dependent, are denoted by $d_p$ and $d_q$; $\tau_q \triangleq 
	\begin{bmatrix}
	u_2 & u_3 & u_4
	\end{bmatrix}^T\in\mathbb{R}^3$ are the control inputs for roll, pitch and yaw; 
	
	From the standard Euler-Lagrange mechanics, the following property holds \cite{bialy2013lyapunov}:
	
\noindent\textbf{Property 1.} The inertia matrix $J (q)$ is uniformly positive definite $\forall q$. 

In the following, we highlight the available system parametric knowledge for control design:
\begin{assum}
The exact knowledge of $m,J$ is not available, and only some upper bounds are known (cf. Remark 6 later); meanwhile, the system dynamics term $C_q(q,\dot{q})$, and unmodelled dynamics $d_p,d_q$ and their bounds are unknown for control design. The terms $d_p,d_q$ can have state-dependency, and hence cannot be bounded a priori.
\end{assum}
\begin{remark}[Validity of Assumption 1]
In practice, maximum allowable payload for a quadrotor is always available; therefore, a priori upper bound knowledge of $m$ and $J$ is plausible for control design, while handling other unknown state-dependent dynamics terms is a control challenge solved in this work. 
\end{remark}

We further take the following standard assumption:
\begin{assum}[\cite{bialy2013lyapunov, mellinger2011minimum, tang2015mixed}]\label{assum_des}
The desired position $p_d  \triangleq
	\begin{bmatrix}
	x_d  & y_d  & z_d
	\end{bmatrix}^T$ and yaw trajectories $\psi _d $ are designed such that they are smooth and bounded. 
\end{assum}
\begin{remark}[Desired roll and pitch]
As clarified in Sect. III.B and also in standard literature \cite{mellinger2011minimum}, the desired roll ($\phi_d$) and pitch $(\theta_d$) angle trajectories are computed using $\tau_p$ and $\psi_d$. 
\end{remark}
\section{Proposed Controller Design and Analysis}
%

\subsection{Control Problem and Objective} Under Property 1 and Assumption 1, the aim is to design an adaptive robust TDE (ARTDE) controller for quadrotors to track a desired trajectory (cf. Assumption 2). 

 The position and attitude co-design approach relies on simultaneously designing an outer loop controller for (\ref{p_tau}) and of an inner loop controller for (\ref{q_tau}). Following this approach (cf. Fig. \ref{fig:block_dia} later), the proposed ARTDE framework is elaborated in the following subsections along with stability analysis. 
 \subsection{Outer Loop Controller}
Let the position tracking error be defined as ${e_p}(t)={p}^d(t)-{p}(t)$. The variable dependency will be removed subsequently for brevity whenever it is obvious.
Before presenting the outer loop controller, the position dynamics (\ref{p_tau}) is re-arranged by introducing a constant ${\bar {m}}$ as
\begin{align} 
&{\bar{m}\ddot {p}} + {{N_p}}({{p}},{{\dot { p}},\ddot{ p}}) = {\boldsymbol {\tau}_p }, \label{robotdynamics2}\\
\text{with}~~~~&{{N_p}}({{p}},{{\dot { p}},\ddot{ p}}) = ({{m}} - {\bar {m}}){\ddot {p}} +  G + {d_p}\nonumber
\end{align}
and the selection of ${\bar {m}}$ is discussed later (cf. Remark \ref{rem_up}). Note that, via Assumption 1, $N_p$ subsumes the unknown/unmodelled dynamics, and it is considered to depend on states $(p, \dot{p})$ via $d_p$.

The control input $\boldsymbol{\tau}_p$ is proposed as
\begin{subequations}\label{tau}
\begin{align}
\boldsymbol{\tau}_p &=\bar{m} u_p+\hat{ N}_p({{p}},{{\dot {p}},\ddot{p}}), \label{input}\\
{u}_p &= {u_p}_0+\boldsymbol\Delta u_p,\label{tarc input}\\
{u_p}_0 &=\ddot{p}^d +  K_{1p}\dot{e}_p + K_{2p}  e_p, \label{aux}
\end{align}
\end{subequations}
where $K_{2p}, K_{1p} \in {R}^{3 \times 3}$ are two user-defined positive definite matrices; $\boldsymbol\Delta u_p$ is the adaptive control term responsible to tackle uncertainties in position dynamics and it will be designed later; $\hat{ N}_p$ is the estimation of $N_p$ derived from the past state and input data as
\begin{equation}\label{approx_p}
{\hat{ N}_p( p,\dot{ p} , \ddot{ p})\cong}N_p( p_L,\dot{ p}_L,\ddot{p}_L)=\boldsymbol(\tau_p)_L-\bar{ m}\ddot{ p}_L,
\end{equation}
where $L>0$ is a small time delay which and its choice is discussed later. The notation $(\cdot)_L=(\cdot)(t-L)$ is defined at the end of Sect. I.
\begin{remark}[Artificial time delay]\label{rem_tde}
The estimation of uncertainty as in (\ref{approx_p}) is based on intentionally (or artificially) introducing a time delay (a.k.a TDE) in the form of past data; some literature (cf. \cite{Ref:9_1, Ref:jin,Ref:jin2017model,Ref:roy2017adaptive, pi2019adaptive, wang2018new, lee2019adaptive, brahmi2018adaptive, lim2019delayed} and referenced therein) terms this mechanism as artificial time delay based design.
\end{remark}

Substituting (\ref{input}) into (\ref{robotdynamics2}), one obtains
\begin{align}
\ddot{e}_p& =- K_{1p}\dot{ e}_p- K_{2p} e_p+\boldsymbol \sigma_p -\boldsymbol\Delta u_p , \label{error dyn delayed 2}
\end{align}
where $\boldsymbol\sigma_p={{\frac{1}{\bar{m}}}}({ N_p}-{\hat{ N}_p})$ is the \textit{estimation error originating from (\ref{approx_p})} and it is termed as the \textit{TDE error}.

The adaptive control term $\Delta \mathbf u_p$ is designed based on the structure of the upper bound of TDE error $\boldsymbol\sigma_p$. Therefore, in the following, we first derive the upper bound structure of $|| \boldsymbol\sigma_p ||$ and subsequently design the proposed adaptive law.

\subsubsection{Upper bound structure of $\boldsymbol \sigma_p$} \label{sec bound}
From (\ref{robotdynamics2}) and (\ref{error dyn delayed 2}), once can derive the following:
\begin{align}
\hat{N}_p&=(N_p)_L=[{{m}}({{p}_L}) - {\bar { m}}]{\ddot { p}_L} + G_L + (d_p)_L ,\label{sig 2} \\
\boldsymbol \sigma_p&=\ddot{ p}- u_p. \label{sig 1} 
\end{align}
Using (\ref{sig 2}), the control input $\boldsymbol \tau_p$ in  (\ref{input}) can be rewritten as
\begin{align}
\boldsymbol {\tau}_p &= \bar{m} u_p+[{ {m}}({{p}_L}) - {\bar {m}}]{\ddot { p}_L} +  G_L + (d_p)_L. \label{tau new}
\end{align}
Multiplying both sides of (\ref{sig 1}) with $m$ and using (\ref{robotdynamics2}) and (\ref{tau new}) we have
\begin{align}
{m} \boldsymbol \sigma_p  &= \boldsymbol {\tau}_p  - N_p-{m}{u_p} \nonumber \\
& = \bar{m}  u_p+[{{m}}({{p}_L}) - {\bar { m}}]{\ddot {p}_L} + G_L + (d_p)_L -N_p -{m}{u_p}. \label{sig 3}
\end{align}
Defining $K_p \triangleq [K_{1p} ~ K_{2p}]$,$\xi_L= \begin{bmatrix}
{ e_L}^T&
\dot{e_L}^T\end{bmatrix}^T$ and using (\ref{error dyn delayed 2}) we have
\begin{align}
\ddot{{p}}_L&=\ddot{{p}}^d_L-\ddot{{e}}_L =\ddot{{p}}^d_L +K_p\boldsymbol \xi_L-\boldsymbol \sigma_L+(\boldsymbol\Delta  {u_p})_L.\label{sig 4}
\end{align}
Substituting (\ref{sig 4}) into (\ref{sig 3}), and after re-arrangement yields
\begin{align}
\boldsymbol \sigma_p &= \underbrace{m^{-1}\bar{{m}}(\boldsymbol \Delta u_p- \boldsymbol (\Delta  u_p)_L)}_{\boldsymbol \chi_1}+\underbrace{m^{-1}({m}_L\boldsymbol (\Delta u_p)_L-{m}\boldsymbol (\Delta u_p)_L)}_{\boldsymbol \chi_2} \nonumber\\
&+\underbrace{m^{-1}\lbrace (\bar{{m}} - {m})\ddot{{p}}^d-({m} - {m}_L)\ddot{{p}}^d_L+ G_L + (d_p)_L - G - d_p \rbrace }_{\boldsymbol \chi_3} \nonumber\\
& +\underbrace{m^{-1}({m}_L-\bar{{m}}){K}_p\boldsymbol \xi_L}_{\boldsymbol \chi_4}-\underbrace{m^{-1}({m}_L-\bar{{m}})\boldsymbol \sigma_L}_{\boldsymbol \chi_5} \nonumber\\
&+\underbrace{(\bar{{m}}/m-1){K}_p\boldsymbol \xi_p}_{\boldsymbol \chi_6}. \label{sig 5}
\end{align}

Any function $\psi$ delayed by time $L$ 
can be represented as
\begin{equation}
\psi_L= 
\psi(t)-\int_{-L}^0 {\frac{\mathrm{d}}{\mathrm{d}\theta}\psi}(t+\theta)\mathrm{d}\theta. \label{delay}
\end{equation}
Since integration of any continuous function over a finite interval (here $-L$ to $0$) is always finite \cite{Ref:rudin}, and using (\ref{delay}), the following conditions are satisfied for unknown constants $\delta_i$, $i=1, \cdots,5$:
\begin{align}
&|| {\boldsymbol \chi_1} ||=|| \frac{1}{m}\bar{{m}} \int_{-L}^{0} \frac{{d}}{{d}\theta}\boldsymbol \Delta { u_p}(t+\theta) {d}\theta || \leq \delta_1 \label{sig 6}\\
&|| {\boldsymbol \chi_2} ||=||\frac{1}{m}\hspace{-.1cm} \int_{-L}^{0} \frac{{d}}{{d}\theta}{m}(t+\theta)\boldsymbol \Delta {u}_p(t+\theta) {d}\theta || \leq \delta_2\\
&|| {\boldsymbol \chi_3} ||=||\frac{1}{m}\lbrace (\bar{{m}} -m)\ddot{{p}}^d-({m}-{m}_L)\ddot{{p}}^d_L \nonumber\\
&\qquad \qquad \qquad  G_L + (d_p)_L -G -d_p \rbrace ||\leq \delta_3 \\
&|| {\boldsymbol \chi_4} ||=|| \frac{1}{m} \int_{-L}^{0} \frac{{d}}{{d}\theta}({m}(t+\theta)-\bar{{m}}){K}_p\boldsymbol \xi_p(t+\theta) {d}\theta  \nonumber \\
& \qquad \qquad +(\bar{{m}}/m-1){K}_p\boldsymbol \xi_p  ||\leq || E_p  K_p ||  ||\boldsymbol \xi_p || +\delta_4 \\
&|| {\boldsymbol \chi_5} ||=|| E_p \boldsymbol \sigma_p +  \frac{1}{m} \int_{-L}^{0} \frac{{d}}{{d}\theta}\lbrace({m}(t+\theta)-\bar{{m}}) \boldsymbol \sigma_p (t+\theta) \rbrace {d}\theta || \nonumber \\
& \qquad \leq ||{ E_p}|| ||\boldsymbol \sigma_p ||  + \delta_5  \label{del_5}\\
&|| {\boldsymbol \chi_6} ||= || (\bar{{m}}/m-1){K}_p\boldsymbol \xi_p || \leq || {E_p}{K}_p|| ||\boldsymbol \xi_p|| . \label{sig 7}
\end{align} 
where the following holds
\begin{equation}
   | E_p |= | 1 -\bar{m}/m | <1. \label{E} 
\end{equation}
Using (\ref{sig 5}) and (\ref{sig 6})-(\ref{sig 7}), one derives
\begin{align}
 \lVert \boldsymbol \sigma_p \rVert &\leq \beta_{0p} + \beta_{1p} \lVert\boldsymbol \xi_p \rVert, \label{sig bound} \\
 \text{where}~ &\beta_{0p}=\frac{\sum_{i=1}^{5}\delta_i}{1-| E_p |}, ~ \beta_{1p}=\frac{2 \lVert E_p  K_p \rVert  }{1-| E_p|} \label{beta}.
\end{align}
\begin{remark}[State-dependent TDE error bound]\label{rem_new}
Note from (\ref{sig bound}) that the TDE error $\boldsymbol \sigma_p$ depends on states via $\boldsymbol \xi_p$ and hence cannot be bounded a priori: for this reason, the standard TDE/ adaptive TDE methods \cite{Ref:9_1, Ref:jin,Ref:jin2017model,Ref:roy2017adaptive, pi2019adaptive, wang2018new, brahmi2018adaptive, lim2019delayed} are not applicable for quadrotors as they rely on a priori bounded TDE error (i.e., they assume $\beta_{1p}=0$). Similar case can be noted for attitude dynamics as well (cf. (\ref{sig bound_q}) later). Therefore, a new and suitable adaptive law $\Delta u_p$ is derived subsequently.  
\end{remark}

\begin{remark}[Choice of $L$]
It can be noted from (\ref{sig 6})-(\ref{del_5}), that high value of time delay $L$ will lead to high values of $\delta_i$, i.e., larger TDE error: therefore, one needs to select the smallest possible $L$, which is usually selected as the sampling time of the low level micro-controller \cite{Ref:9_1, Ref:jin,Ref:jin2017model,Ref:roy2017adaptive, pi2019adaptive, wang2018new, brahmi2018adaptive, lee2019adaptive, lim2019delayed, lee2012experimental, wang2016model, dhadekar2021robust}.
\end{remark}


\subsubsection{Designing $\Delta u_p$} \label{sec gain}
The term $\boldsymbol\Delta u_p$ is designed as
\begin{align}
&\boldsymbol\Delta {{u}}_p =\alpha_p c_p \frac{s_p}{||s_p||},  \label{delta u2_p} 
\end{align}
where ${s_p=B}^T{U_p}\xi_p$, $\xi_p= \begin{bmatrix}
{ e_p}^T&
\dot{e_p}^T
\end{bmatrix}^T$ and $U_p$ is the solution of the Lyapunov equation ${A_p}^TU_p+{U_p}{A_p}=-Q_p$ for some ${Q_p>0}$, where $  A_p=\begin{bmatrix}
{0} & {I} \\
-{K}_{2p} & -{K}_{1p}
\end{bmatrix}$, ${B}=\begin{bmatrix}
{0}\\
{I}
\end{bmatrix}$; $\alpha_p \in \mathbb{R}^{+}$ is a user-defined scalar; ${c_p} \in \mathbb{R}^{+}$ is the overall switching gain which provides robustness against the TDE error. 

The gain $c_p$ in (\ref{delta u2_p}) is constructed from the upper bound structure of $||\boldsymbol \sigma_p ||$ as
\begin{align}
c_p=\hat{\beta}_{0p}+\hat{\beta}_{1p} ||\boldsymbol \xi_p || , \label{sw gain} 
\end{align}
where $\hat{\beta}_{0p}, \hat{\beta}_{1p}$ are the estimates of $\beta_{0p}, \beta_{1p} \in \mathbb{R}^{+}$, respectively. 
The gains are evaluated as follows: 
\begin{align}
\dot{\hat{\beta}}_{ip} =
  \begin{cases}
     \lVert\boldsymbol \xi_p \rVert^i \lVert  s_p \rVert,       &  {\text{if} ~ \text{any}~\hat{\beta}_{ip} \leq \underline{\beta}_{ip} ~ \text{or} ~{s}_p^T \dot{{s}}_p >0} \\
    -  \lVert\boldsymbol \xi_p \rVert^i \lVert  s_p \rVert,       & {\text{if} ~ {s}_p^T \dot{{s}}_p  \leq 0 ~\text{and all}~  \hat{\beta}_{ip} > \underline{\beta}_{ip} }
  \end{cases}, \label{ATRC_p} 
\end{align}
\noindent with $\hat{\beta}_{ip}(0) \geq \underline{\beta}_{ip} >0$, $i=0,1$ are user-defined scalars. Combining (\ref{input}), (\ref{tarc input}), (\ref{approx}) and (\ref{delta u2}), we have
\begin{equation}\label{theproposedcontrollaw}
\begin{split}
{\boldsymbol{\tau_p }} = & \underbrace {({\boldsymbol{\tau }_p)_L} - {\bar{m}}{{{\ddot { p}}}_L}}_{{\text{TDE~part}}}+\underbrace {{\bar{ m}}({{{\ddot { p}}}^d} + K_{1p}{\dot e}_p + K_{2p}{e_p})}_{{\text{Desired~dynamics~injection~part}}}\\
&+\underbrace { {\bar{ m}} c_p (s_p/||s_p||) .}_{{\text{Adaptive-robust~control~part}}} 
\end{split}
\end{equation}
Note that eventually the actual input $U$ is applied to the system using $\tau_p$ and the transformation as in (\ref{tau_conv}). 
\begin{figure}[!h]
    \centering
    \includegraphics[width=1\columnwidth]{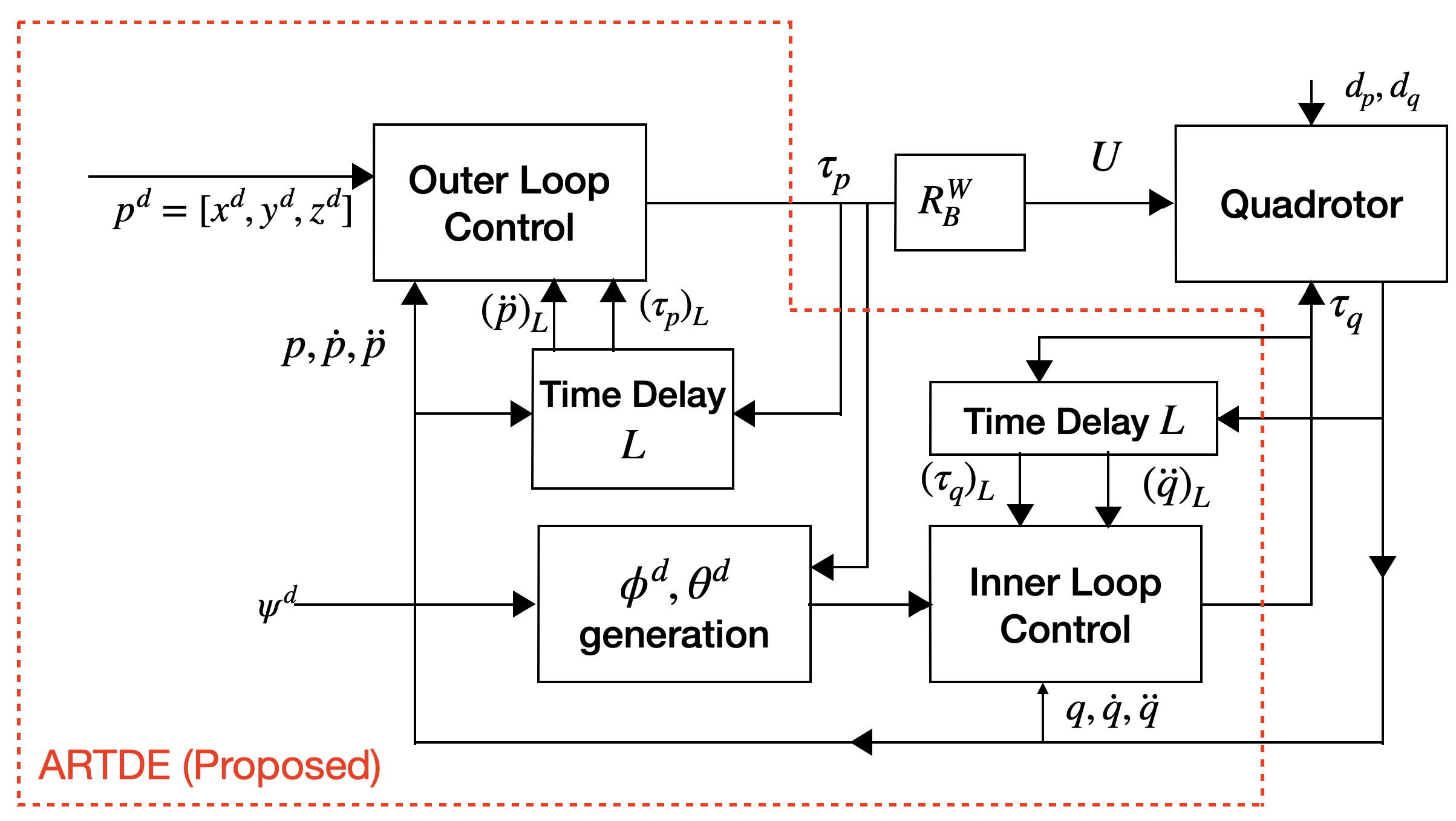}
    \centering
    \caption{Block diagram of the proposed ARTDE control framework via the outer- and inner loop co-design approach.}
    \label{fig:block_dia}
\end{figure} 
\subsection{Inner Loop Controller}
For designing the inner loop controller, the desired roll ($\phi_d$) and pitch ($\theta_d$) angles are first to be computed via defining an intermediate coordinate frame as (cf. \cite{mellinger2011minimum}):
\begin{subequations}\label{int_co}
\begin{align}
    z_B &= \frac{\tau_p}{||\tau_p||},~y_A = \begin{bmatrix}
    -s_{\psi_d} & c_{\psi_d} & 0
\end{bmatrix}^T \\
    x_B &= \frac{y_A \times z_B}{||y_A \times z_B||},~y_B = z_B \times x_B
\end{align}
\end{subequations}
where $(x_B,y_B,z_B)$ denote the $(x, y, z)$-axis of the body-fixed coordinate frame and $y_A$ is the $y$-axis of the intermediate coordinate frame $A$. Based on the given desired yaw angle $\psi_d (t)$ and the derived intermediate axes as in (\ref{int_co}), $\phi_d (t)$ and $\theta_d (t)$ can be computed using the desired body-fixed frame axes as described in \cite{mellinger2011minimum} (omitted due to lack of space). 

Further, the orientation/attitude error is defined as \cite{mellinger2011minimum}
\begin{align}
    e_q &= {((R_d)^T R_B^W - (R_B^W)^T R_d)}^{v} \label{q_err} \\
    \dot{e}_q & = \dot{q} - R_d^T R_B^W \dot{q}_d
\end{align}
where $R_d$ is the rotation matrix as in (\ref{rot_matrix}) evaluated at ($\phi_d, \theta_d, \psi_d$) and $(.)^v$ is the \textit{vee} map converting elements of $SO(3)$ to $\in{\mathbb{R}^3}$ \cite{mellinger2011minimum}.

Introducing a constant matrix $\bar{J}$ (cf. Remark \ref{rem_up} for its choice), the attitude dynamics (\ref{q_tau}) is re-arranged as 
\begin{align} 
&{\bar{J}\ddot {q}} + {{N_q}}({{q}},{{\dot { q}},\ddot{q}}) = {\boldsymbol {\tau}_q } \label{sys_q},\\
\text{with}~~~~&{{N_q}}({{q}},{{\dot { q}},\ddot{ q}}) = [{{J}} - {\bar {J}}]{\ddot {q}} +  C_q{\dot { q}} + {d_q}.
\end{align}
The control input $\boldsymbol{\tau}_q$ is designed as
\begin{subequations}\label{tau_q}
\begin{align}
\boldsymbol{\tau}_q &=\bar{J} u_q+\hat{ N}_q({{q}},{{\dot {q}},\ddot{q}}), \label{input_q}\\
{u}_q &= {u_q}_0+\boldsymbol\Delta u_q,\label{tarc input_q}\\
{u_q}_0 &=\ddot{q}^d +  K_{1q}\dot{e}_q + K_{2q}  e_q, \label{aux_q}
\end{align}
\end{subequations}
where $K_{1q}, K_{2q} \in {R}^{3 \times 3}$ are two user-defined positive definite matrices; $\boldsymbol\Delta u_q$ is the adaptive control term responsible to tackle uncertainties in attitude dynamics and it would be designed later; $\hat{ N}_q$ is the estimation of $N_q$ computed via TDE as
\begin{equation}\label{approx}
{\hat{ N}_q( q,\dot{ q} , \ddot{ q})\cong}N_q( q_L,\dot{ q}_L,\ddot{q}_L)=\boldsymbol(\tau_q)_L-\bar{ J}\ddot{ q}_L.
\end{equation}
Substituting (\ref{input_q}) into (\ref{sys_q}), we obtain the error dynamics as
\begin{align}
\ddot{e}_q& =- K_{1q}\dot{ e}_q- K_{2q} e_q+\boldsymbol \sigma_q- \boldsymbol\Delta u_q, \label{error dyn delayed 2_q}
\end{align}
where $\boldsymbol\sigma_q={\bar{J}}^{-1}({ N_q}-{\hat{N_q}})$ represents the attitude \textit{TDE error}.

\subsubsection{Upper bound structure of $\boldsymbol \sigma_q$}The upper bound structure for $||\sigma_q||$ can be derived in a similar fashion to that of $||\sigma_p||$ in Sect. \ref{sec bound} for the outer loop control (omitted due to lack of space and to avoid repetition) and it is found as
\begin{align}
 \lVert \boldsymbol \sigma_q \rVert &\leq \beta_{0q} + \beta_{1q} \lVert\boldsymbol \xi_q \rVert, \label{sig bound_q} \\
 \text{where}~ &\beta_{0q}=\frac{\sum_{i=1}^{5}\bar{\delta}_i}{1-\lVert E_q \rVert}, ~ \beta_{1q}=\frac{2 \lVert E_q  K_{q} \rVert  }{1-\lVert E_q\rVert}, K_q \triangleq [K_{1q} ~ K_{2q}] \nonumber 
\end{align}
where $\bar{\delta}_i$ are unknown constants and the following holds
\begin{equation}
   || E_q ||= ||J^{-1}\bar{J}-I|| < 1. \label{E_q} 
\end{equation}
\subsubsection{Designing $\Delta u_q$} The term $\boldsymbol\Delta u_q$ is designed as
\begin{align}
&\boldsymbol\Delta {{u}}_q =\alpha_q c_q\frac{s_q}{||s_q||},  \label{delta u2} 
\end{align}
where ${s_q=B}^T{U_q}\xi_q$, $\xi_q= \begin{bmatrix}
{ e_q}^T&
\dot{e_q}^T
\end{bmatrix}^T$ and $U_q$ is the solution of the Lyapunov equation ${A_q}^TU_q+{U_q}{A_q}=-Q_q$ for some ${Q_q>0}$, where $  A_q=\begin{bmatrix}
{0} & {I} \\
-{K}_{2q} & -{K}_{1q}
\end{bmatrix}$, ${B}=\begin{bmatrix}
{0}\\
{I}
\end{bmatrix}$; $\alpha_q \in \mathbb{R}^{+}$ is a user-defined scalar; ${c_q} \in \mathbb{R}^{+}$ is the overall switching gain designed as
\begin{align}
c_q=\hat{\beta}_{0q}+\hat{\beta}_{1q} ||\boldsymbol \xi_q || , \label{sw gain} 
\end{align}
where $\hat{\beta}_{0q}, \hat{\beta}_{1q}$ are the estimates of $\beta_{0q}, \beta_{1q} \in \mathbb{R}^{+}$, respectively. 
The gains are evaluated as follows: 
\begin{align}
\dot{\hat{\beta}}_{iq} =
  \begin{cases}
   \lVert\boldsymbol \xi_q \rVert^i \lVert  s_q \rVert,       &  {\text{if} ~ \text{any}~\hat{\beta}_{iq} \leq \underline{\beta}_{iq} ~ \text{or} ~{s}_q^T \dot{{s}}_q >0} \\
    -  \lVert\boldsymbol \xi_q \rVert^i \lVert  s_q \rVert,       & {\text{if} ~ {s}_q^T \dot{{s}}_q  \leq 0 ~\text{and all}~  \hat{\beta}_{iq} > \underline{\beta}_{iq} }
  \end{cases}, \label{ATRC} 
\end{align}
\noindent with $\hat{\beta}_{iq}(0) \geq \underline{\beta}_{iq} >0$, $i=0,1$ are user-defined scalars. Finally, the inner loop control becomes
\begin{equation}\label{theproposedcontrollaw_q}
\begin{split}
{\boldsymbol{\tau_q }} = & \underbrace {({\boldsymbol{\tau }_q)_L} - {\bar{J}}{{{\ddot { q}}}_L}}_{{\text{TDE~part}}}+\underbrace {{\bar{ J}}({{{\ddot { q}}}^d} + {{{K}}_D}{\dot { e_q}} + {{{K}}_P}{{e_q}})}_{{\text{Desired~dynamics~injection~part}}}\\
&+\underbrace {{\bar{ J}} c_q (s_q/||s_q||) .}_{{\text{Adaptive-robust~control~part}}} %
\end{split}
\end{equation}
The overall proposed control framework, comprising the simultaneous design of outer- and inner loop control is depicted via Fig. \ref{fig:block_dia}. 
\begin{remark}[On the choice of $\bar{m}$ and $\bar{J}$]\label{rem_up}
Based on the a priori knowledge of the maximum payload carrying capacity of the system, i.e. upper bounds on $m$ and $J$ (cf. Remark 1), one can always design $\bar{m}$ and $\bar{J}$ which satisfy (\ref{E}) and (\ref{E_q}) respectively. Such condition is standard in TDE literature \cite{Ref:9_1, Ref:jin,Ref:jin2017model,Ref:roy2017adaptive, pi2019adaptive, wang2018new, lee2019adaptive, brahmi2018adaptive, lim2019delayed}: hence, we do not introduce any additional constraint while tackling the state-dependent TDE error.
\end{remark}
\subsection{Stability Analysis}
\begin{theorem}
Under Assumptions 1-2 and Property 1, the closed-loop trajectories of the systems (\ref{robotdynamics2}) and (\ref{sys_q}) using the control laws (\ref{theproposedcontrollaw}), (\ref{theproposedcontrollaw_q}) in conjunction with the adaptive laws (\ref{ATRC_p}), (\ref{ATRC}) and design conditions (\ref{E}) and (\ref{E_q}), are Uniformly Ultimately Bounded (UUB).
\end{theorem}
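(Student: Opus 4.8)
The plan is to prove UUB for the coupled outer- and inner-loop error systems through a single composite Lyapunov function that splits additively across the two loops. I would take
\begin{align*}
\bar V &= \bar V_p + \bar V_q, \\
\bar V_p &= \tfrac{1}{2}\xi_p^T U_p \xi_p + \tfrac{1}{2}\textstyle\sum_{i=0}^{1}(\hat\beta_{ip}-\beta_{ip}^*)^2, \\
\bar V_q &= \tfrac{1}{2}\xi_q^T U_q \xi_q + \tfrac{1}{2}\textstyle\sum_{i=0}^{1}(\hat\beta_{iq}-\beta_{iq}^*)^2,
\end{align*}
with constants $\beta_{i\bullet}^*\geq\beta_{i\bullet}(t)$ and $\bullet\in\{p,q\}$. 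First I would cast each loop's error dynamics into the state-space form $\dot\xi_\bullet = A_\bullet\xi_\bullet + B(\sigma_\bullet-\Delta u_\bullet)$, and use the Lyapunov equations $A_\bullet^T U_\bullet + U_\bullet A_\bullet = -Q_\bullet$ together with $s_\bullet=B^T U_\bullet\xi_\bullet$ to reduce the quadratic part to $\dot V_\bullet = -\tfrac12\xi_\bullet^T Q_\bullet\xi_\bullet + s_\bullet^T(\sigma_\bullet-\Delta u_\bullet)$.

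Next I would substitute the unit-vector control $\Delta u_\bullet=\alpha_\bullet c_\bullet s_\bullet/\|s_\bullet\|$, which yields $s_\bullet^T\Delta u_\bullet=\alpha_\bullet c_\bullet\|s_\bullet\|$ exactly, and bound $s_\bullet^T\sigma_\bullet$ via the state-dependent estimates \eqref{sig bound} and \eqref{sig bound_q}. With the switching gain $c_\bullet=\hat\beta_{0\bullet}+\hat\beta_{1\bullet}\|\xi_\bullet\|$, the analysis reduces to the two branches of the adaptive law \eqref{ATRC_p} (and its attitude counterpart) per loop. On the increasing branch the gain-error derivatives combine with the $\beta^*$ residual so that, using $\beta_{i\bullet}^*\ge\beta_{i\bullet}$ and $\alpha_\bullet>1$, everything collapses to $\dot V_\bullet\le-\tfrac12\xi_\bullet^T Q_\bullet\xi_\bullet-(\alpha_\bullet-1)(\hat\beta_{0\bullet}+\hat\beta_{1\bullet}\|\xi_\bullet\|)\|s_\bullet\|\le0$; on the decreasing branch the sign flip leaves a residual $2(\beta_{0\bullet}^*+\beta_{1\bullet}^*\|\xi_\bullet\|)\|s_\bullet\|$, which is bounded because $s_\bullet^T\dot s_\bullet\le0$ forces $\|\xi_\bullet\|,\|s_\bullet\|\in\mathcal{L}_\infty$. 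Since $\dot{\bar V}$ carries no cross terms between the $p$- and $q$-blocks, I would run this branch-wise argument on each loop independently, exactly paralleling the case analysis in the proof of Theorem 1.

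The decisive technical step, mirroring Case (iii) of Theorem 1, is showing the adaptive gains stay bounded despite their upward drift whenever $s_\bullet^T\dot s_\bullet>0$. I would replicate the finite-time argument: while $\|s_\bullet\|$ is increasing it is bounded below, so the adaptive law enforces minimum growth rates on $\hat\beta_{0\bullet},\hat\beta_{1\bullet}$, which render $\dot V_\bullet$ strictly negative within finite times; thereafter $\|\xi_\bullet\|$ and hence $\|s_\bullet\|$ decrease, the law switches to its decreasing branch, and the gains are capped. Assembling the two loops then gives $\dot{\bar V}\le-\upsilon\bar V-z(\|\xi_p\|^2+\|\xi_q\|^2)+\kappa$ for constants $\upsilon,z,\kappa>0$ outside a compact set, so $\dot{\bar V}<0$ whenever either $\|\xi_p\|$ or $\|\xi_q\|$ is large enough, which is precisely the UUB conclusion.

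The main obstacle I anticipate is the genuine interconnection buried in the co-design: the outer loop produces $\tau_p$, from which the desired attitude $(\phi_d,\theta_d)$ and thus $\xi_q$ are generated, while attitude error corrupts the realized thrust direction and re-enters the position channel through $d_p$ and $\sigma_p$. I would argue this coupling is already absorbed by the state-dependent bound \eqref{sig bound}, whose $\beta_{1p}\|\xi_p\|$ term captures exactly this mismatch, so that the additive splitting of $\dot{\bar V}$ survives without assuming either loop's state is a priori bounded. A secondary difficulty is the singularity of $s_\bullet/\|s_\bullet\|$ at $s_\bullet=0$; I would handle it, as in Theorem 1, by confining the negativity claim to a region $\|s_\bullet\|\ge\varphi_\bullet$ and showing the complementary boundary layer contributes only to the ultimate bound.
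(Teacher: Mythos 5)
Your proposal follows essentially the same route as the paper's Appendix proof: an additive composite Lyapunov function over the position and attitude loops plus the gain-estimation errors, reduction via the Lyapunov equations $A_\bullet^T U_\bullet + U_\bullet A_\bullet = -Q_\bullet$ and $s_\bullet = B^T U_\bullet \xi_\bullet$, a case split over the increasing/decreasing branches of the adaptive laws in each loop, and the bound $\dot{\bar V} \le -\upsilon \bar V + \kappa$ yielding UUB. Your additional remarks on the finite-time gain-boundedness argument, the $\|s_\bullet\|\ge\varphi_\bullet$ boundary layer, and the inner--outer loop coupling are refinements the paper's quadrotor appendix does not spell out (the first two appear only in the bipedal chapter's proof), but they do not change the underlying argument.
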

\begin{proof}
See Appendix.
\end{proof}
\begin{figure*}
    \includegraphics[width=\columnwidth]{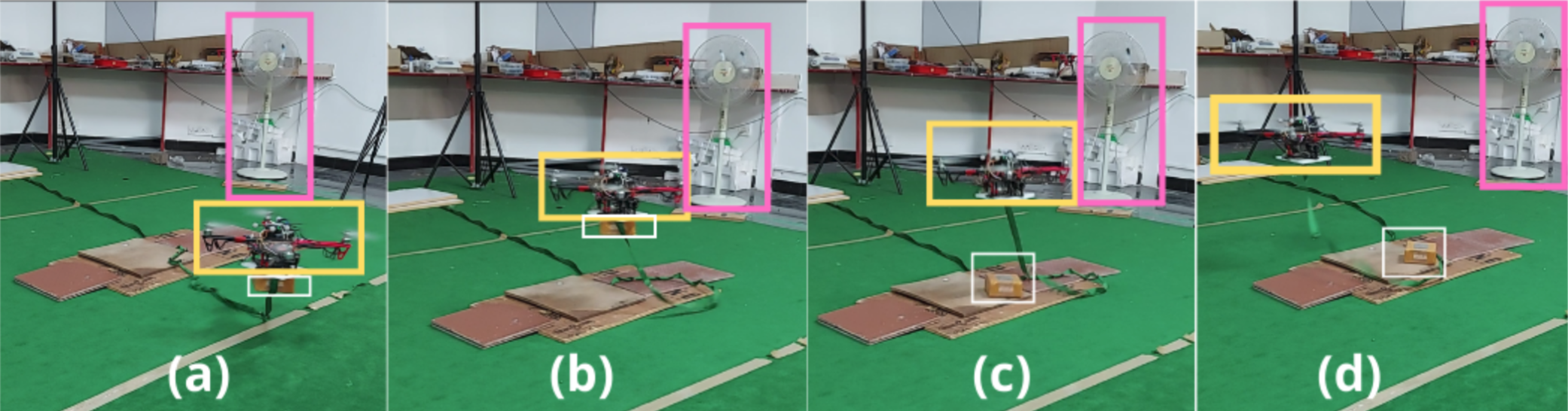}
    \centering
    \caption{Snapshots from the experiment with proposed ARTDE: (a) flying one loop of the trajectory with payload and (b) dropping it at the starting point; (c) initiating the other loop without payload and (d) completing the trajectory.}
    \label{fig:stages}
\end{figure*}

\section{Experimental Results and Analysis}
The proposed ARTDE is tested on a quadrotor setup (Q-450 frame with Turnigy SK3-2826 brushless motors) which uses Raspberry Pi-4 as a processing unit and one electromagnetic gripper (0.03 kg approx.). Excluding the processing unit, gripper and any payload, the quadrotor weighs approx. $1.4$ kg. Optitrack motion capture system (at 60fps) and IMU were used to obtain quadrotor pose and state-derivatives were obtained via fusing these sensor data for the necessary feedback. 
To properly verify the importance of designing state-dependent adaptive control structure, the performance of ARTDE is compared with a conventional adaptive TDE (ATDE) \cite{lee2019adaptive}, and also with a non-TDE adaptive method, adaptive sliding mode control (ASMC) \cite{mofid2018adaptive}, for the sake of completeness. ARTDE mainly differs from ATDE and ASMC in the way its switching gain is adapted to tackle state-dependent uncertainty and the consequent stability analysis. 

\subsubsection{Experimental Scenario and Parameter Selection}
 The objective is the quadrotor should follow an infinity shaped 2-loop path in 3D plane (cf. Figs. \ref{fig:stages}-\ref{fig:3D_Traj}; the height from the ground is purposefully kept relatively small, as it is well-known that near-ground operations are more challenging to control since unknown ground-reaction forces are created by displaced wind from propellers. In addition, a fan is used to create external wind disturbances. The experimental scenario consists of the following sequences (cf. Fig. \ref{fig:stages}): 
\begin{itemize}
    \item The quadrotror starts from the center of the path (where the loops intersect) with a payload ($0.35$kg), it completes one loop with the payload and drops it at its starting position ($t=35$s).
    \item Then, the quadrotor completes another loop without the payload and comes back to the origin.
\end{itemize}
The gripper operation to release the load  is not part of the proposed control design and is operated via a remote signal. For experiment, the control parameters of the proposed ARTDE are selected as: $\bar{ m}=1$ kg, $\bar{J}=0.015I$ (kgm$^2$), ${K}_{2p}=K_{2q} =10$, $  K_{1p}=K_{1q}=5$, $L=0.015$,
${Q_p=Q_q =I}$, $\epsilon_p=\epsilon_q=5\times10^{-5}$, $\underline{\beta}_{ip}=\underline{\beta}_{iq}=0.01$, $\hat{\beta}_{ip}(0)=\hat{\beta}_{iq}(0)=0.01$, $i=0,1$. {For parity in comparison, same values of $\bar{ m}, \bar{J}$, ${K}_{ip},K_{iq}$ and $L=1/60$ are selected for ATDE \cite{lee2019adaptive}}, sliding variables for ASMC \cite{mofid2018adaptive} are selected to be $s_p$ and $s_q$.




\subsubsection{Experimental Results and Analysis} The performances of the controllers are depicted in Figs. \ref{fig:error_exp1}-\ref{fig:error_att_exp1} and further collected in Table \ref{table 1} via root-mean-squared (RMS) error and peak error (absolute value). It is evident from the error plots that both ATDE and ASMC are turbulent while following the trajectory, specifically in the $y$ direction. The quadrotor also swayed before stabilizing after dropping the payload, while the proposed ARTDE could maintain its position after dropping of payload. This can be verified from Table \ref{table 1}, where ARTDE provides more than $30\%$ improved accuracy compared to ATDE and ASMC in $y$ and $z$ directions. These results confirm the benefit of considering state-dependent error-based adaptive control law as opposed to conventional a priori bounded adaptive control structures. 



\begin{figure}[!h]
    \includegraphics[width=3.5in, height= 3.5in]{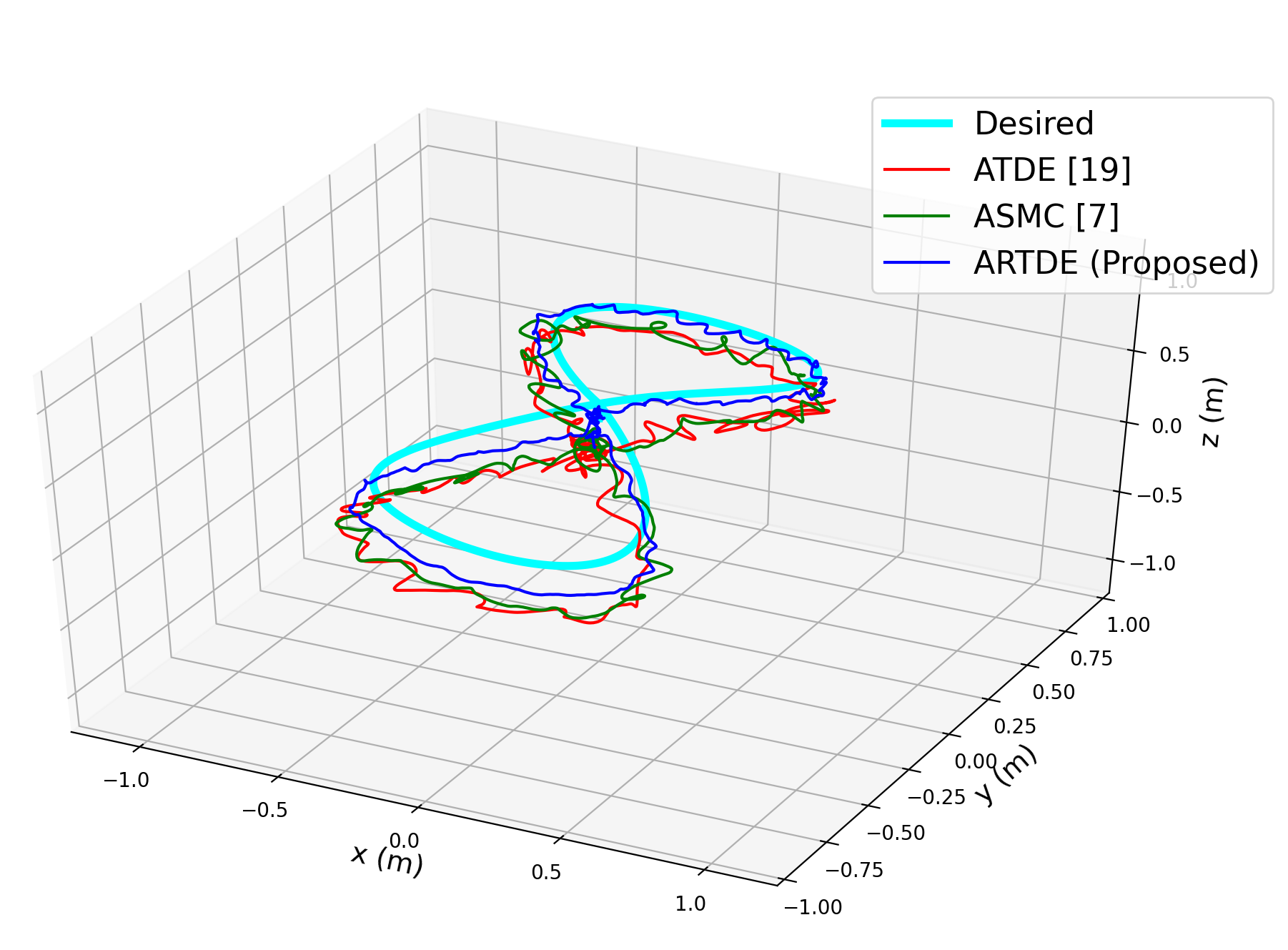}
    \centering
    \caption{Tracking performance of the Infinity-shaped loop.}
    \label{fig:3D_Traj}
\end{figure}
\begin{figure}[!h]
    \includegraphics[width=3.5in, height= 3.5in]{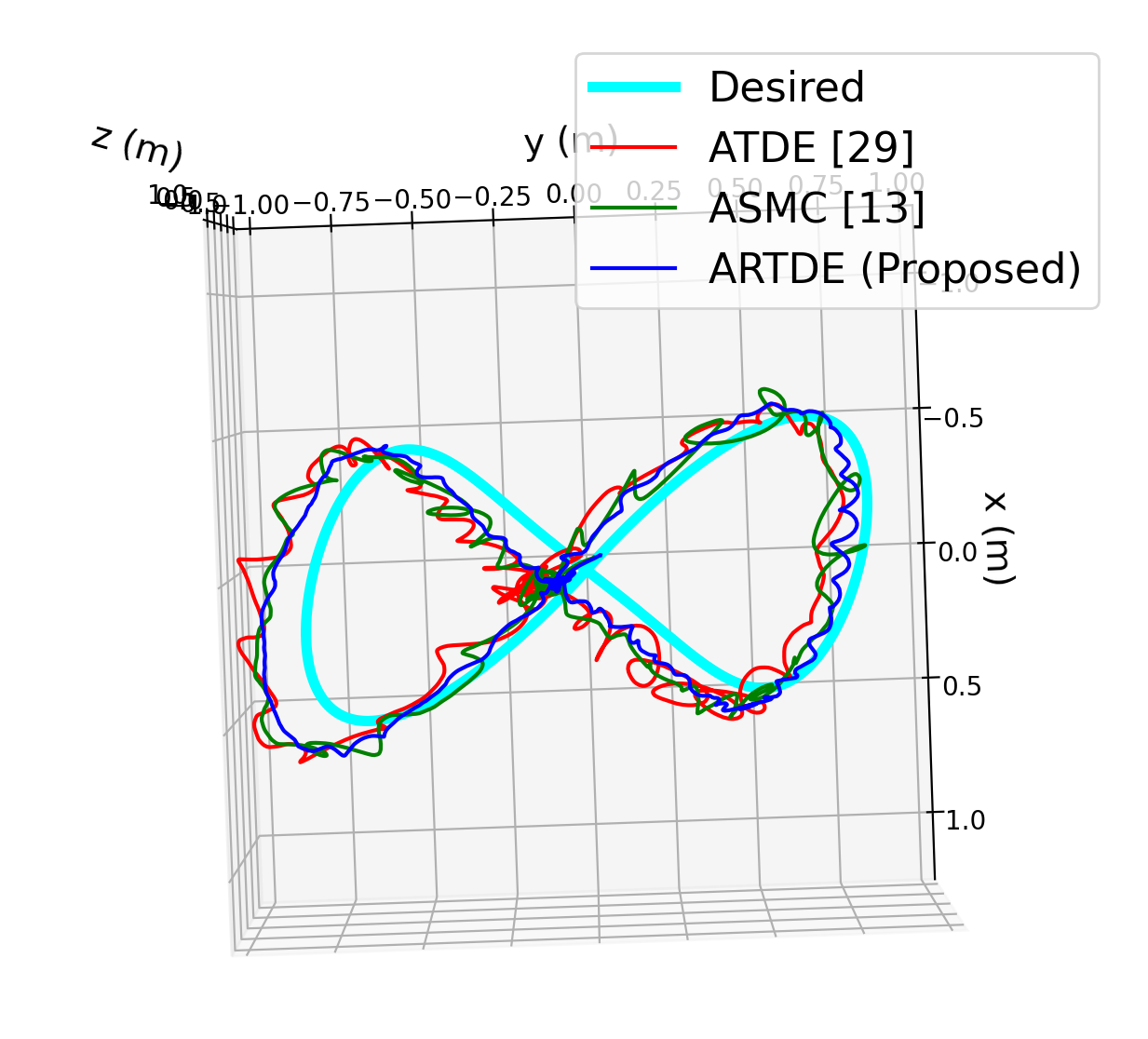}
    \centering
    \caption{Tracking performance of the Infinity-shaped loop in XZ Plane.}
    \label{fig:3D_Traj}
\end{figure}
\begin{figure}[!h]
    \includegraphics[width=3.5in, height= 3.5in]{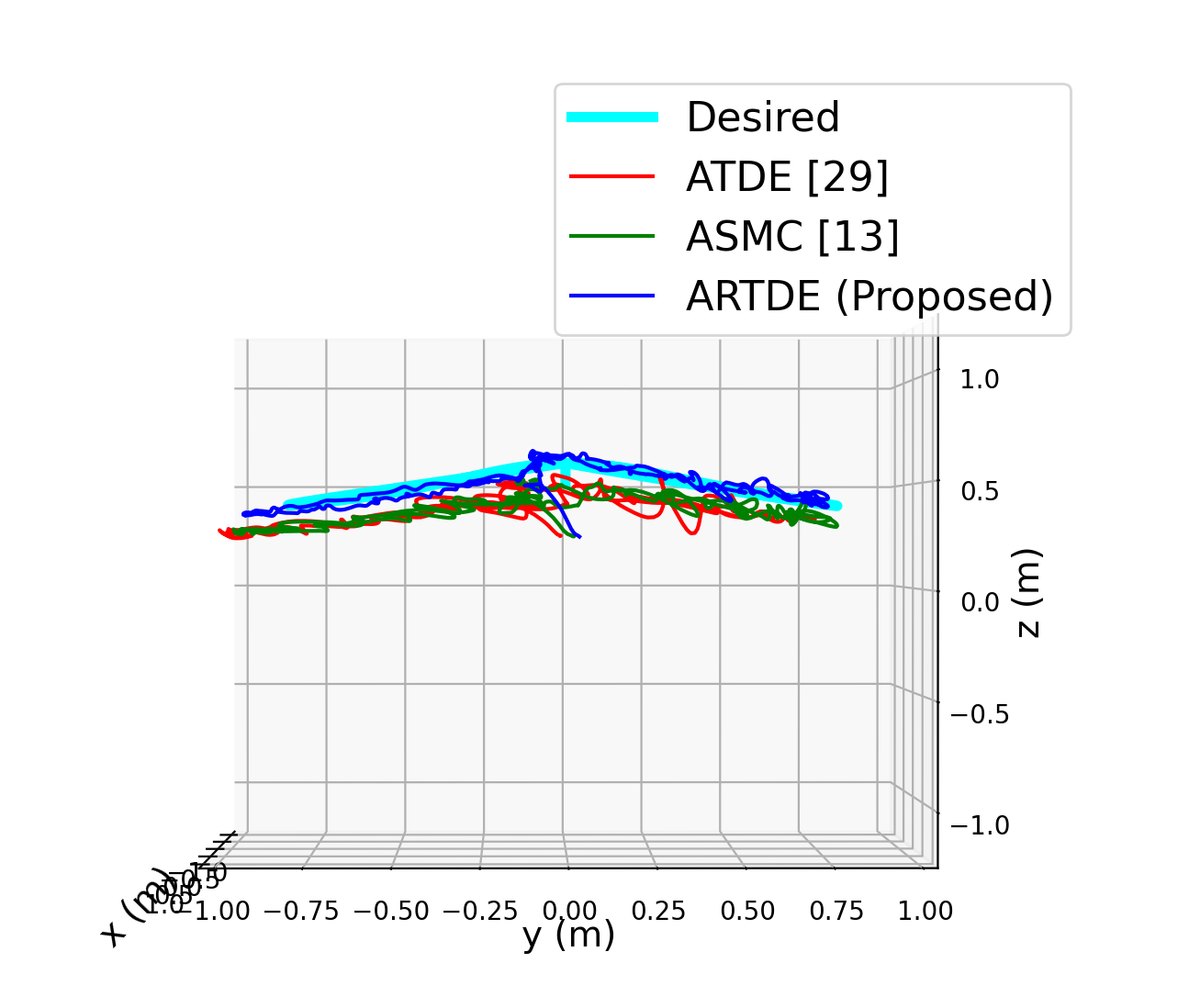}
    \centering
    \caption{Tracking performance of the Infinity-shaped loop in YZ Plane.}
    \label{fig:3D_Traj}
\end{figure}
\begin{figure}[!h]
    \includegraphics[width=\columnwidth]{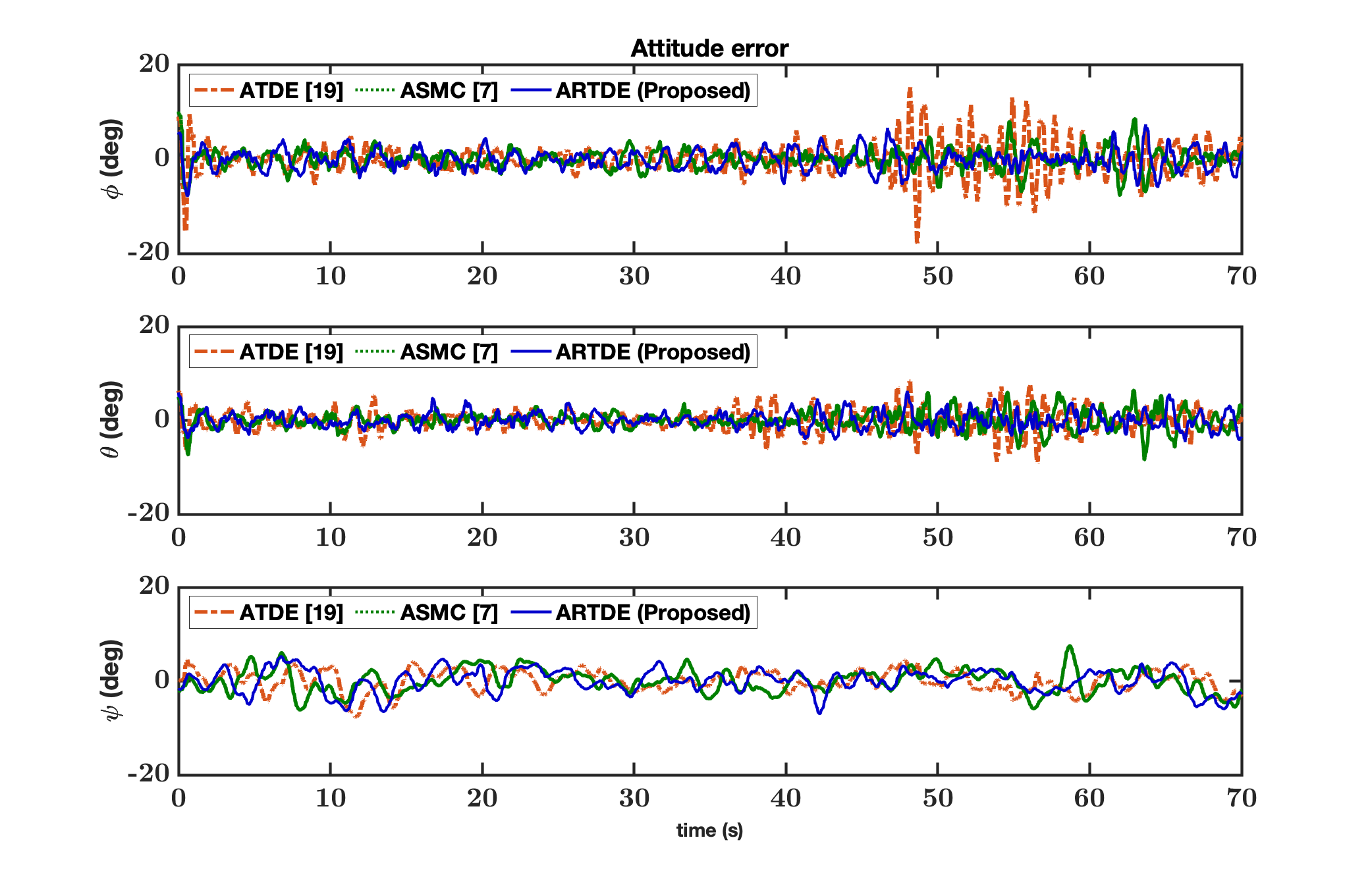}
    \centering
    \caption{Attitude tracking error comparison.}
    \label{fig:error_att_exp1}
\end{figure}

\begin{figure}[!h]
    \includegraphics[width=\columnwidth]{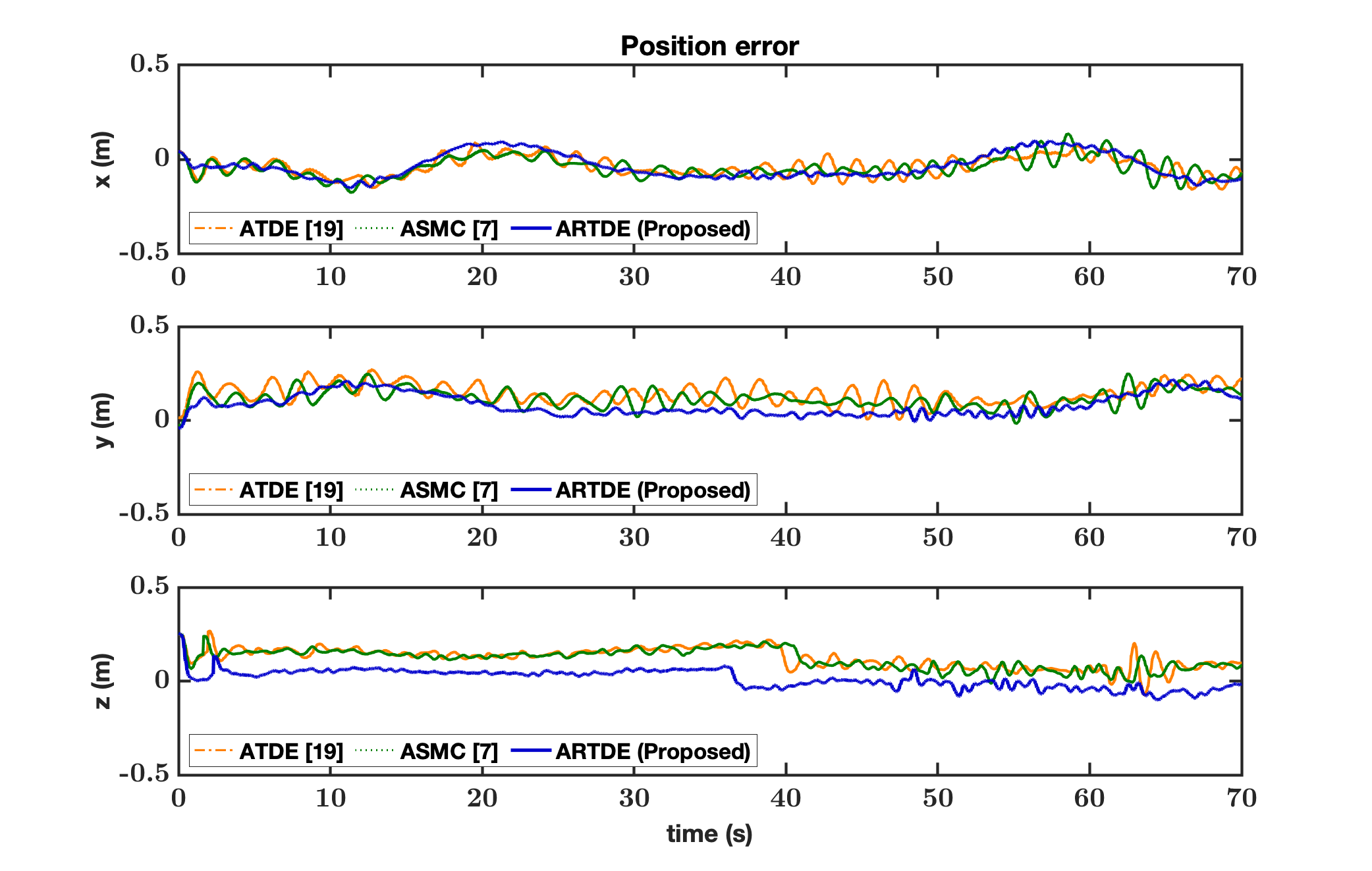}
\centering
    \caption{Position tracking error comparison.}
    \label{fig:error_exp1}
\end{figure}

\begin{table}[!t]
\renewcommand{\arraystretch}{1.0}
\caption{{Performance comparison}}
\label{table 1}
		\centering
{
{	\begin{tabular}{c c c c c c c}
		\hline
		\hline
		& \multicolumn{3}{c}{RMS error (m)} & \multicolumn{3}{c}{RMS error (degree)}  \\ \cline{1-7}
		 Controller & $x$ & $y$  & $z$  & $\phi$ & $\theta$  & $\psi$  \\
		 \hline
		ATDE \cite{lee2019adaptive} & 0.07& 0.15  & 0.13 & 3.37 & 2.28  & 2.22 \\
		\hline
		
		ASMC \cite{mofid2018adaptive}& 0.07 & 0.13 & 0.13 & 2.04 & 1.81 & 2.46 \\
		\hline
		ARTDE (proposed)& { 0.06} & { 0.10}  & {0.04} & 2.12 & 1.72  & 2.50 \\
		\hline
		\hline
		& \multicolumn{3}{c}{Peak error (m)} & \multicolumn{3}{c}{Peak error (degree)}  \\ \cline{1-7}
		 Controller & $x$ & $y$  & $z$  & $\phi$ & $\theta$  & $\psi$  \\
		 \hline
		ATDE \cite{lee2019adaptive} & 0.09 & 0.27  & 0.27 & 15.32 & 8.29  & 6.01 \\
		\hline
		
		ASMC \cite{mofid2018adaptive} & 0.13 & 0.25 & 0.25 & 9.53 & 6.46 & 7.50 \\
		\hline
		ARTDE (proposed)& { 0.09} & { 0.22}  & {0.25} & 7.46 & 6.25  & 5.76 \\
		\hline
		\hline
\end{tabular}}}
\end{table}
\section*{Appendix}
\textbf{Proof of Theorem 1:} The stability analysis of ARTDE is carried out using the following Lyapunov function candidate:
The error dynamics (\ref{error dyn delayed 2}) and (\ref{error dyn delayed 2_q}) can be written as
\begin{align}
\dot{\boldsymbol \xi}_p& =A_p \boldsymbol\xi_p+{B} (\boldsymbol\sigma_p-\boldsymbol\Delta  u_p),
\dot{\boldsymbol \xi}_q=A_q \boldsymbol\xi_p+{B} (\boldsymbol\sigma_q-\boldsymbol\Delta  u_q).\label{error dyn delayed_q}
\end{align}
Positive definiteness of $K_{ip}$ and $ K_{iq}$ $i=1,2$ guarantee that ${A}_p$ and $A_q$ are Hurwitz. The first condition in the adaptive laws (\ref{ATRC_p}), (\ref{ATRC}) reveal that gains $\hat{\beta}_{ip}$ and $\hat{\beta}_{iq}$ increase if they attempt to go below $\underline{\beta}_{ip} $ and $\underline{\beta}_{iq} $ respectively; this yields $\hat{\beta}_{ip}(t) \geq  \underline{\beta}_{ip} $, $\hat{\beta}_{iq}(t) \geq  \underline{\beta}_{iq} $ $\forall t \geq 0$ $\forall i=0,1$. Further, these adaptive laws enumerate the following four possible cases.

\textbf{Case (i):} Both $\dot{\hat{\beta}}_{ip}(t) >0, \dot{\hat{\beta}}_{iq}(t) >0 $ \\
Using the Lyapunov equations ${A}_p^T{U_p+U_pA_p=-Q_p}$ and ${A}_q^T{U_q+U_q A_q=-Q_q}$ the time derivative of (\ref{lyapunov1}) yields
\begin{align}
\dot{{V}} & \leq -(1/2)\boldsymbol\xi_p^{T}{Q_p} \boldsymbol\xi_p+  {s_p}^{T} \lbrace- c_p (s_p/||s_p||)+\boldsymbol \sigma_p \rbrace  \nonumber
-(1/2)\boldsymbol\xi_q^{T}{Q_q} \boldsymbol\xi_q+  {s_q}^{T} \lbrace-c_q (s_q/||s_q||)+\boldsymbol \sigma_q \rbrace  \nonumber\\
&  +\sum_{i=0}^{1}{(\hat{\beta}_{ip}-\beta_{ip})\dot{\hat{\beta}}_{ip}} +{(\hat{\beta}_{iq}-\beta_{iq})\dot{\hat{\beta}}_{iq}} \nonumber\\ 
& \leq - (1/2)(\boldsymbol\xi_p^{T}{Q_p} \boldsymbol\xi_p+\boldsymbol\xi_q^{T}{Q_q} \boldsymbol\xi_q)- c_p || {s_p} || - c_q || {s_q} || + (\beta_{0p} + \beta_{1p}\boldsymbol ||\xi_p ||) || {s_p}||+(\beta_{0q}+ \beta_{1q}|| \boldsymbol \xi_q ||) || {s_q}||  \nonumber\\
& +\sum_{i=0}^{1}\lbrace(\hat{\beta}_{ip}-\beta_{ip})||\xi_p ||^i || {s_p}|| +(\hat{\beta}_{iq}-\beta_{iq})|| \boldsymbol \xi_q ||^i || {s_q}|| \rbrace \nonumber\\
& \leq -(1/2)( \lambda_{\min}(Q_p)||{\boldsymbol \xi_p}||^2+ \lambda_{\min}(Q_q)||{\boldsymbol \xi_q}||^2) \leq 0.  \label{case 1}
\end{align}

 \textbf{Case (ii):} Both $\dot{\hat{\beta}}_{ip}(t) < 0, \dot{\hat{\beta}}_{iq}(t) < 0 $ \\
For this case, the time derivative of (\ref{lyapunov1}) yields
\begin{align}
&\dot{V} \leq -  (1/2)(\boldsymbol\xi_p^{T}{Q_p} \boldsymbol\xi_p+\boldsymbol\xi_q^{T}{Q_q} \boldsymbol\xi_q)- c_p || {s_p} || - c_q || {s_q} || + (\beta_{0p} + \beta_{1p}\boldsymbol ||\xi_p ||) || {s_p}||+(\beta_{0q}+ \beta_{1q}|| \boldsymbol \xi_q ||) || {s_q}||  \nonumber\\
& -\sum_{i=0}^{1}\lbrace(\hat{\beta}_{ip}-\beta_{ip}^{})||\xi_p ||^i || {s_p}|| +(\hat{\beta}_{iq}-\beta_{iq}^{})|| \boldsymbol \xi_q ||^i || {s_q}|| \rbrace \nonumber\\
& \leq -(1/2) (\lambda_{\min}(Q_p)||{\boldsymbol \xi_p}||^2 + \lambda_{\min}(Q_q)||{\boldsymbol \xi_q}||^2)+2 \lbrace (\beta_{0p}^{}+\beta_{1p}^{}|| \boldsymbol \xi_p ||) || s_p || +(\beta_{0q}^{}+\beta_{1q}^{}|| \boldsymbol \xi_q ||) || s_q || \rbrace .\label{case 2}
\end{align}
The second laws of (\ref{ATRC_p}) and (\ref{ATRC}) yield $s_p^T \dot{s}_p \leq 0, s_q^T \dot{s}_q \leq 0$ which imply $|| {s_p} ||, || {s_q} ||, || \boldsymbol \xi_p ||, || \boldsymbol \xi_q ||  \in \mathcal{L}_\infty$ (cf. the relation ${s_p} =  B^T  U_p \boldsymbol \xi_p $, ${s_q} =  B^T  U_q \boldsymbol \xi_q $). Thus, $\exists \varsigma_p, \varsigma_q  \in \mathbb{R}^{+}$ such that $$2 (\beta_{0p}^{}+\beta_{1p}^{}|| \boldsymbol \xi_p ||) || s_p || \leq \varsigma_p , ~2 (\beta_{0q}^{}+\beta_{1q}^{}|| \boldsymbol \xi_q ||) || s_q || \leq \varsigma_q. $$ Further,
The gains $\hat{\beta}_{ip}, \hat{\beta}_{iq} \in \mathcal{L}_{\infty}$ in Case (i) and decrease in Case (ii). This implies $\exists \varpi_p, \varpi_q \in \mathbb{R}^{+}$ such that $$\sum_{i=0}^{1}{(\hat{\beta}_{ip}-\beta_{ip}^{})^2} \leq \varpi_p, ~\sum_{i=0}^{1}{(\hat{\beta}_{iq}-\beta_{iq}^{})^2} \leq \varpi_q$$     Therefore, the definition of $V$ in (\ref{lyapunov1}) yields
\begin{equation}
{V} \leq   \lambda_{\max} ({U_p}) ||{\boldsymbol \xi_p}||^2+ \lambda_{\max} ({U_q}) ||{\boldsymbol \xi_q}||^2 + \varpi_p+\varpi_q. \label{V}
\end{equation}
Using the relation (\ref{V}), (\ref{case 2}) can be written as
\begin{equation}
\dot{V} \leq - \upsilon V  + \varsigma_p+  \varsigma_q + \upsilon (\varpi_p+\varpi_q), \label{V_dot}
\end{equation}
where $\upsilon \triangleq \frac{\min \lbrace  \lambda_{\min}(Q_p),  \lambda_{\min}(Q_q) \rbrace}{\max \lbrace \lambda_{\max} ({U_p}), \lambda_{\max} ({U_q})  \rbrace}$. 

\par \textbf{Case (iii):}  $\dot{\hat{\beta}}_{ip}(t) > 0, \dot{\hat{\beta}}_{iq}(t) < 0 $ \\
Following the derivations for Cases (i) and (ii) we have
\begin{align}
\dot{V} &\leq -(1/2) (\lambda_{\min}(Q_p)||{\boldsymbol \xi_p}||^2 + \lambda_{\min}(Q_q)||{\boldsymbol \xi_q}||^2) +2 (\beta_{0q}^{}+\beta_{1q}^{}|| \boldsymbol \xi_q ||) || s_q ||  \nonumber\\
& \leq  - \upsilon V  +  \varsigma_q + \upsilon \varpi_q. \label{case 3}
\end{align}
\textbf{Case (iv):} $\dot{\hat{\beta}}_{ip}(t) < 0, \dot{\hat{\beta}}_{iq}(t) > 0 $ \\
Following the previous cases yields
\begin{align}
\dot{V} &\leq -(1/2) (\lambda_{\min}(Q_p)||{\boldsymbol \xi_p}||^2 + \lambda_{\min}(Q_q)||{\boldsymbol \xi_q}||^2)+2 (\beta_{0p}^{}+\beta_{1p}^{}|| \boldsymbol \xi_p ||) || s_p ||  \nonumber\\
& \leq  - \upsilon V  +  \varsigma_p + \upsilon \varpi_p.\label{case 4}
\end{align}
The stability results from Cases (i)-(iv) reveal that the closed-loop system is UUB.
\thispagestyle{empty}
\chapter{Conclusion}

In this thesis, the application of adaptive time-delay based controller to bipedal walking with payload and aerial transportation of payloads via quadrotors was discussed:

\begin{itemize}
    \item \textbf{Bipedal Walking:} In this work, the adaptive controller for
bipedal walking was designed to effectively provide robustness against unmodelled state-dependent constraint forces and impulse forces for all bipedal walking phases. Thus,
the control design and implementation became simpler compared to a multi-modal dynamics based multiple controller paradigm. Via extensive simulations under various forms of disturbances, it was shown that the state-of-art might lead to falling during walking, while the proposed controller could execute the desired walking motion with notable accuracy.

\item \textbf{Aerial Transportation of Payloads via Quadrotors:} In this scenario, artificial delay based adaptive controller for quadrotors was proposed to tackle state-dependent uncertainties. Closed-loop system stability was analytically
verified. The experimental results under uncertain scenario showed significant performance improvements for the proposed controller against state-of-the-art methods.
\end{itemize}

\section{Future Work:}
\begin{enumerate}
    \item \textbf{Bipedal Walking:} The current work has the same controller for both single support and double support phases. In future, we are looking to formulate a suitable controller that could handle the switching in dynamics between single and double support phases. 
    
    \item \textbf{Aerial Transportation of Payloads via Quadrotors:} In this work, the payload is considered to be directly attached to the quadrotor by a gripper or similar mechanism. However, if the size of the payload is big enough to intersect with the rotor plane, then the propwash can destabilize the system \cite{suraj2022introducing}. One possible solution is to suspend the payload via a cable, which, nevertheless, creates a different control challenge by introducing additional unactuated degrees-of-freedom (payload swing angles). In future, an important direction would be to solve such a control challenge in an adaptive setting. 
\end{enumerate}
\vspace{5mm}
\newpage
\begin{center}
 {\Large \bf Publications}
\end{center}

\begin{itemize}
    \item A Bhaskar*, \textbf{S Dantu*}, S Roy, J Lee, S Baldi, "\textbf{Adaptive artificial time delay control for bipedal walking with robustification to state-dependent constraint forces}", \textit{International Conference on Advanced Robotics, 2021} \\
    \textit{* joint first authorship}
    
    \item \textbf{S Dantu}, RD Yadav, S Roy, J Lee, S Baldi, "\textbf{Adaptive Artificial Time Delay Control for Quadrotors under State-dependent Unknown Dynamics}", \textit{ IEEE International Conference on Robotics and Biomimetics, 2022}
\end{itemize}


\bibliographystyle{IEEEtran}
\bibliography{ref} 

\end{document}